\newcommand{\E}[1]{\mathbb{E}\left[#1\right]}
\newcommand{\limm}{\lim\limits_{m\rightarrow\infty}}
\newcommand{\scp}[2]{\langle #1,#2\rangle}
\newcommand{\xn}[1]{x^{(m)}_{#1}}
\def\R{\mathbb{R}}
\def\var{\text{-var}}
\newcommand{\ds}{\mathrm{d}}
\newcommand{\RR}{\mathbb{R}}  
\newtheorem{theorem}{Theorem}[section]
\newtheorem{lemma}{Lemma}[section]
\newtheorem{proposition}{Proposition}[section]
\theoremstyle{definition}
\newtheorem{definition}{Definition}[section]
\newtheorem{remark}{Remark}[section]
\newtheorem{ass}{Assumption}
\title{Chaotic Regularization and Heavy-Tailed Limits for Deterministic Gradient Descent}
\author{%
  Soon Hoe Lim \\
  Nordita\\
  KTH Royal Institute of Technology 
  and Stockholm University \\
  \texttt{soon.hoe.lim@su.se} \\
   \And
   Yijun Wan \\
   D\'epartement de Math\'ematiques et Applications \\ 
   \'{E}cole Normale Sup\'{e}rieure, Universit\'e PSL \\
   \texttt{wan@clipper.ens.fr} \\
   \AND
   Umut \c{S}im\c{s}ekli \\
   DI ENS, \'{E}cole Normale Sup\'{e}rieure, Universit\'e PSL, CNRS, INRIA \\
   \texttt{umut.simsekli@inria.fr}  
 }
\begin{document}

\maketitle

\begin{abstract}

%



Recent studies have shown that gradient descent (GD) can achieve improved generalization when its dynamics exhibits a chaotic behavior. However, to obtain the desired effect, the step-size should be chosen sufficiently large, a task which is problem dependent and can be difficult in practice. In this study, we incorporate a chaotic component to GD in a controlled manner, and introduce \emph{multiscale perturbed GD} (MPGD), a novel optimization framework where the GD recursion is augmented with chaotic perturbations that evolve via an independent dynamical system. We analyze MPGD from three different angles: (i) By building up on recent advances in rough paths theory, we show that, under appropriate assumptions, as the step-size decreases, the MPGD recursion converges weakly to a stochastic differential equation (SDE) driven by a heavy-tailed L\'{e}vy-stable process. (ii) By making connections to recently developed generalization bounds for heavy-tailed processes, we  derive a generalization bound for the limiting SDE and relate the worst-case generalization error over the trajectories of the process to the parameters of MPGD. (iii) We analyze the implicit regularization effect brought by the dynamical regularization and show that, in the weak perturbation regime, MPGD introduces terms that penalize the Hessian of the loss function. Empirical results are provided to demonstrate the advantages of MPGD.

\end{abstract}


\section{Introduction}
\label{sect_intro}

Many important problems in supervised learning can be expressed by the following population risk minimization problem:
\begin{align}\label{eq:pop-risk}
\min_{x\in\mathbb{R}^d} \Bigl\{ \mathcal{R}(x) := \mathbb{E}_{z\sim\mathcal{D}} [\ell(x,z)] \Bigr\},
\end{align}
where $x \in \mathbb{R}^d$ denotes the learnable parameter, $z \in \mathcal{Z}$ denotes a data sample coming from an unknown data distribution $\mathcal{D}$, $\mathcal{Z}$ denotes the space of data points, and $\ell : \mathbb{R}^d \times \mathcal{Z} \to \mathbb{R}_+$ is the loss function, measuring the quality of the parameter $x$.
Since the data distribution $\mathcal{D}$ is not known, as a proxy for solving \eqref{eq:pop-risk}, we consider the empirical risk minimization (ERM) problem, given as follows:
\begin{align}\label{eq:erm}
\min_{x\in\mathbb{R}^d} \Bigl\{ \hat{\mathcal{R}}(x, S_n) := \frac1{n} \sum_{i=1}^n \ell(x,z_i) \Bigr\},
\end{align}
where $S_n := \{z_1,\dots,z_n\}$ denotes a \emph{training set} of $n$ points that are independently and identically distributed (i.i.d.) and sampled from $\mathcal{D}$. To solve the ERM problem, typically gradient-based optimization algorithms are being used in practice. 

In this study, we will consider the gradient descent (GD) algorithm \cite{cauchy1847methode}, which is based on the following deterministic recursion:
\begin{align}\label{eqn:sgd}
  x_{k+1} = x_{k} - \eta_k \nabla \hat{\mathcal{R}} (x_{k}, S_n), \ \ \ k=0,1,2,\dots.
\end{align}
Here, $k$ represents the iteration number, $\eta_k > 0$ is the step-size (learning rate), and $x_0 \in \RR^d$ is the initialization. Even though stochastic counterparts of GD (e.g., stochastic GD \cite{bottou2018optimization}) have been more popular due to their reduced computational requirements, the GD algorithm has attracted increasing attention in the past few years, as it has been illustrated that GD can achieve similar generalization performance, as measured by the \emph{generalization error} $|\hat{\mathcal{R}}(x_k,S_n) -\mathcal{R}(x_k)|$, as long as certain design choices are implemented \cite{geiping2021stochastic}. These design choices mainly include using large and/or oscillating learning rates and using explicit regularization of the empirical risk. 

Contrary to the wisdom of  optimization theory, which typically suggests using smaller learning rates, the fact that large learning rates often provide better generalization performance in modern non-convex optimization problems has opened up several research directions. In this context, \cite{cohen2021gradient} showed that large learning rates drive the GD dynamics towards the `edge of stability', meaning that the learning rate should be chosen large enough to make the algorithm diverge. Recently, the large learning rates have been extended to exotic learning rate schedules (i.e., non-trivial evolution of $\eta_k$), such as fractal scheduling \cite{agarwal2021acceleration} and cyclic scheduling \cite{oymak2021provable, smith2017cyclical}. 

It has been shown that the use of large learning rates introduces a \emph{chaotic behavior} in the GD dynamics \cite{kong2020stochasticity}. Hence, a possible explanation for the improvements brought by large learning rates can be partially attributed to such chaotic behavior, since chaotic systems exhibit a stochastic-like evolutions (even though they are fully deterministic), which might be an underlying cause of the improvements in generalization obtained in non-convex settings. However, achieving the desired chaotic dynamics is a problem-dependent task and can be often difficult in practice. 




In this paper, we develop a novel optimization framework, coined {\it multiscale perturbed gradient descent} (MPGD), where  a chaotic component is introduced to GD in a controlled manner. Our approach is based on extending the GD recursion via generic regularization terms, where the regularization coefficients are modulated by using external deterministic dynamical systems that exhibit chaotic characteristics. Our main contributions are  as follows.
\begin{itemize}[itemsep=0pt,topsep=0pt,leftmargin=*,align=left]
    \item We build up on recent advances in rough paths theory and homogenization techniques \cite{gottwald2013homogenization, chevyrev2016multiscale, chevyrev2020superdiffusive}, and show that, under appropriate assumptions, as the step-size decreases, the MPGD recursion converges weakly to a stochastic differential equation (SDE)  driven by a heavy-tailed L\'{e}vy-stable process (Theorem~\ref{thm:main}; see also Theorem~\ref{thm:main_app} in Appendix).
    \item We draw connections to the recent theoretical links between heavy-tailed processes and generalization \cite{simsekli2019tail, csimcsekli2020hausdorff,  hodgkinson2021generalization}, and under certain topological regularity assumptions,  we derive a generalization bound (Theorem~\ref{thm:gen}) for the limiting SDE and relate the worst-case generalization error over its trajectories to the parameters of MPGD.
    \item We show that MPGD exhibits a natural form of implicit regularization by deriving an appropriate explicit regularizer in the weak perturbation regime (Theorem~\ref{thm_implreg}). Our analysis shows that, in this regime, the derived regularizer introduces terms that penalize the Hessian of the loss function.
    \item We empirically demonstrate the advantages of MPGD on different settings and provide further support for the developed framework and the theory (see Section \ref{sect_empiricalresults}). 
\end{itemize}

We emphasize that our primary goal is to understand the behavior of deterministic GD in a chaotic setting. In this respect, we shall underline that, at this stage, our goal is not to develop a competitive algorithm that would outperform SGD, but rather to provide a solid theoretical analysis for how stochastic behavior can emerge from GD with suitable deterministic components, as well as their implication in terms of generalization error.





\section{Preliminaries and Technical Background} \label{sec:background}

\textbf{Notation. }
For a vector $v \in \RR^d$, we denote its $i$th component by either $v^i$ or $[v]^i$ and the Euclidean norm by $|v|$ (or $\|v\|$). $\rm{diag}(v)$ denotes the diagonal matrix with $i$th diagonal entry of $v^i$.  $\langle \cdot,\cdot \rangle$ denotes the dot product of two vectors.  $\mathcal{U}(d)$ denotes uniform distribution on $d$ dimension, $tr$  denotes trace, and the superscript $^T$ denotes  transposition. $\circ$ and $\odot$ denote composition and Hadamard product respectively. $\nabla$ and $\nabla^2$ denote gradient and Hessian respectively. $\mathbb{E}$ denotes expectation. $\overset{(\mathrm{d})}{=}$ and $\overset{\mathrm{(d)}}{\to}$ denote equivalence and convergence in the sense of distribution respectively. $I$ denotes the identity matrix, and $1_A$ denotes indicator function of the set $A$.

\textbf{Stable distributions.}
Let $\alpha \in (0,2]$ be a stability parameter and $d\ge 1$. A random variable $X \in \mathbb{R}^d$ is \emph{$\alpha$-stable} if $X_1,\, X_2,\,\ldots$ are independent copies of $X$, then
$n^{-1/\alpha}\sum_{j=1}^n X_j \overset{(\mathrm{d})}{=} X \text{ for all } n\ge 1$ \cite{samoradnitsky2017stable}.
Stable distributions appear as the limiting distribution in the generalized
central limit theorem (CLT) \cite{gnedenko_kolmogorov}. The case $\alpha=2$ corresponds to the Gaussian distribution, while $\alpha=1$ corresponds to the Cauchy distribution. The $p$-th moment of a stable random variable is finite if and only if $p<\alpha$. Here we are interested in the case $\alpha\in(1,2)$, such that $\mathbb{E}[|X|]< \infty$ and $\mathbb{E}[|X|^2] = \infty$. If $X$ is symmetric, then there exists a scale parameter $c>0$ and the characteristic function of $X$ is given by $
\E{\exp(i \scp{\xi}{X})} = \exp(-|c\xi|^\alpha)$ for all $\xi\in\R^d$. If $X$ is a random vector with independent components, then there exists a scale vector~$\mathbf{c}=(c^1,\ldots,c^d) \in \mathbb{R}^d_+$ and the characteristic function of $X$ is given by~$\E{\exp(i \scp{\xi}{X})} = \exp(-\sum_{j=1}^d|c^j\xi^j|^\alpha)$.

\textbf{L\'evy processes. }
L\'evy processes are stochastic processes $(\mathrm{L}_t)_{t \geq 0}$ with independent and stationary increments, and  are defined as follows \cite{applebaum2009levy}:
\begin{enumerate}[itemsep=0pt,topsep=0pt,leftmargin=*,align=left]
    \item For $N \in \mathbb{N}$ and $t_0 < t_1 < \ldots < t_N$, the increments $(\mathrm{L}_{t_i} - \mathrm{L}_{t_{i-1}})$ are independent for all $i$.
    \item For any $t > s > 0$, $(\mathrm{L}_t - \mathrm{L}_s)$ and $\mathrm{L}_{t-s}$ have the same distribution.
    \item $\mathrm{L}_t$ is continuous in probability, i.e., for all $\delta>0$ and $s \ge 0$, $\mathbb{P}(|\mathrm{L}_t - \mathrm{L}_s| > \delta) \to 0$ as $t \to s$.
\end{enumerate}
Typical examples of L\'evy processes include Brownian motion and the $\alpha$-stable processes. By the L\'evy-Khintchine formula, a L\'evy process $(\mathrm{L}_t)_{t\ge 0}$ with $\mathrm{L}_0=0$ is determined by a triplet~$(b,\Sigma,\nu)$ for some $b\in \R^d$, $\Sigma \in \R^{d\times d}$ positive semi-definite and a measure $\nu$ on $\R^d\setminus\{0\}$ such that~$\int_{x \neq 0} \min\{1, |x|^2\} \nu(dx) < \infty$. The characteristic function of $\mathrm{L}_t$ is therefore $\exp(-t\Psi(\xi))$, with the characteristic exponent $\Psi: \mathbb{R}^d \to \mathbb{C}$ defined by
\begin{equation} \label{eq_charexp}
\Psi(\xi) = - i\scp{b}{\xi} + \frac{1}{2}\scp{\xi}{\Sigma \xi} + \int_{\R^d}\left[1 - e^{i\scp{x}{\xi}} + i \scp{x}{\xi} 1_{|x|<1}\right] \nu(dx).
\end{equation}
In the L\'evy–It\^o decomposition of a L\'evy process, $b$ denotes a constant drift, $\Sigma$ is the covariance matrix of a Brownian motion and $\nu$ characterises a L\'evy jump process, which is modified to be c\`adl\`ag (right-continuous and has left limits everywhere) with countably many jumps. In particular, if $b = 0_{\R^d}$, $\Sigma = 0_{\R^{d\times d}}$ and $\nu(dx)= |x|^{-\alpha-1}dx$ for some $\alpha\in (0,2)$, then the triplet~$(0_{\R^d},0_{\R^{d\times d}}, |x|^{-\alpha-1}dx)$ gives a symmetric $\alpha$-stable process, denoted by $(\mathrm{L}^{\alpha}_t)_{t\ge 0}$.

\textbf{Marcus differential equations.}
Passing to the limit of the driving processes with jumps in an SDE is not trivial, as one needs to be careful with the meaning of integration \cite{chechkin2014marcus}. Unfortunately, the common It\^o and Stratonovich integration (which are based on Riemann-Stieltjes sums with the integrand evaluated at the left end point and the midpoint of the partition intervals respectively) fail to provide the desired convergence. To this end, we resort to Marcus differential equations \cite{marcus1981modeling}, which in their basic form are given by:
\[
\ds X_t = b(X_t) \diamond \ds \mathrm{L}_t,
\]
where $\diamond$ denotes integration in the Marcus sense. Marcus integrals involve sums over infinitely many jumps and transform under the usual laws of calculus \cite{applebaum2009levy}, and thus play a similar role for L\'evy processes as the Stratonovich integral for Brownian motion. The solution map of Marcus differential equations is continuous with respect to the driving process under certain variants of the Skorokhod topology. 
Note that if $b(X_t)$ is constant in $X_t$, then there is no difference in the solutions of the SDE in the sense of Marcus, It\^o and Stratonovich. We refer to Appendix \ref{app_sec:technical} for more details on these.

\paragraph{Generating stable laws using the Thaler map.} Let $\gamma := 1/\alpha$. It was shown in \cite{gottwald2021simulation} that $\alpha$-stable laws with $\alpha \in (1,2)$ can be generated using a deterministic dynamical system whose states $y_k$ are obtained by iterating the Thaler map $T: [0,1] \to [0,1]$ \cite{thaler1980estimates}: 
\begin{equation} \label{eq_Thaler}
T(y) = (y^{1-\gamma} + (1+y)^{1-\gamma} - 1)^{1/(1-\gamma)} \mod 1 \, , \, y_{k+1} = T(y_k),
\end{equation}
with any source of randomness coming solely from the initial condition $y_0 \in [0,1]$. The map has the following properties. Let $y^* \in (0,1)$ be the unique solution to the equation
$$(y^*)^{1-\gamma} + (1+y^*)^{1-\gamma} = 2.$$
There are two increasing branches  on the intervals $[0,y^*]$, $[y^*, 1]$.
For $\gamma \in [0,1)$, there exists a unique invariant probability density $h(y) = \frac{1-\gamma}{2^{1-\gamma}} (y^{-\gamma} + (1+y)^{-\gamma})$. Moreover, the density  defines a finite measure. For $\gamma \in (0,1)$, the map is nonuniformly expanding with a neutral fixed point at $y=0$  and correlations decay algebraically with rate $1/k^{\alpha - 1}$ (which is sharp) \cite{gottwald2021simulation}. Heuristically, the trajectory spends prolonged period of times near $y=0$ (laminar dynamics near the fixed point), slowing down the decay of correlation as $\alpha$ decreases.

Let the observable $v: [0,1] \to \mathbb{R}$ be a H\"older function with zero mean with respect to the invariant measure of $T$, i.e., $\ds\nu = h \ds y$, and consider the sequence of random variables $(v\circ T^k(y_0))_{k\ge 0}$, with the randomness coming from the initial state $ y_0$. When $\gamma \in (1/2, 1)$, the correlations are not summable and CLT breaks down if $v(0) \neq 0$ (it ''sees'' the  fixed point at $y=0$). Heuristically, the Birkhoff sum $\sum_{j=0}^{k-1} v \circ T^j$ is ballistic  with almost linear behavior near $x=0$ and the small jumps of size $v(0)$ accumulate into a single large jump, contradicting CLT \cite{gottwald2021simulation}. Instead, if $v$ is properly normalized, then it was shown in \cite{gouezel2004central} that the one-sided stable limit law 
$\frac{1}{k^{\gamma}} \sum_{j=0}^{k-1} v \circ T^j \overset{\mathrm{(d)}}{\to} X_{\alpha, \beta}\, \text{ as } \, k \to \infty, \text{ with } \beta=\mathrm{sign}(v(0))$,  holds instead, where $X_{\alpha, \beta}$ denotes the stable law whose characteristic function is given by: 
\begin{equation} \label{eq:charfunc}
\mathbb{E}[e^{it X_{\alpha, \beta}}] = \exp(- |t|^\alpha (1-i \beta \, \mathrm{sign}(t) \tan(\alpha \pi/2))).
\end{equation}

In this paper, we focus on MPGD with chaotic components whose limiting behavior at each time is described by the class of stable laws $X_{\alpha, \beta}$ whose characteristic function is given by \eqref{eq:charfunc} with the stability parameter $\alpha \in (1, 2)$ (i.e., $\gamma \in (1/2, 1)$) and the skewness parameter $\beta \in [-1,1]$. The $X_{\alpha, \beta}$ is centered (i.e., $\mathbb{E} X_{\alpha, \beta} = 0$), totally skewed (or one-sided) if $\beta = \pm 1$, and symmetric if $\beta = 0$.

\section{Multiscale Perturbed Gradient Descent (MPGD)} \label{sec:MPGD}

In full generality, we start by considering the following family (parametrized by $m > 0$) of deterministic fast-slow dynamical systems on $\R^d \times Y$, with $Y$ a bounded metric space:
\begin{equation}\label{DS}
    \left\{ 
    \begin{aligned}
    \xn{k+1} &= \xn{k} + \frac1{m}a_m(\xn{k}) + \sum_{i=1}^q \frac{1}{m^{1/\alpha_i}}b_m^{(i)}(\xn{k})v_{i}(y_k^{(i)}),\\
    y_{k+1}^{(i)} &= T_i(y_k^{(i)}), \ \ i=1,2,\ldots,q,
    \end{aligned}
    \right.
\end{equation}
for $k=0,1,2,\dots$, where the $\alpha_i \in (1,2)$, $a_m: \R^d \to \R^d$, $b_m^{(i)}: \R^d \to \R^{d \times r_i}$, $v_{i}: Y \to \R^{r_i}$, and $T_i: Y \to Y$. The initial conditions $x_0^{(m)}, y_0^{(i)}$ are independent random variables.  One important example of recursions of the form \eqref{DS} is the GD defined in \eqref{eqn:sgd}.
Other than GD, the above family of recursions \eqref{DS} includes various variants of GD  as special cases upon specifying suitable choices of the $a_m$, $b_m^{(i)}$, $v_i$, $T_i$, and the dimensions. 

Of particular interest are new algorithms that can be studied within the above framework. To this end, we introduce {\it multiscale perturbed gradient descent} (MPGD), whose dynamics are described by the following recursion:
\begin{equation} \label{eq_ouralg}
    \left\{ 
    \begin{aligned}
    \xn{k+1} &= \xn{k} - \frac{1}{m}  \nabla \hat{\mathcal{R}}(\xn{k}, S_n)  -  \frac{\mu}{m^{\frac{1}{\alpha_1}}} v_{1}(y_k^{(1)}) \odot \xn{k}   + \frac{\sigma}{m^{\frac{1}{\alpha_2}}} v_{2}(y_k^{(2)}),\\
    y^{(1)}_{k+1} &= T(y_k^{(1)}), \ \  y^{(2)}_{k+1} = T(y_k^{(2)}), \ \ k=0,1,2,\dots,
    \end{aligned}
    \right.
\end{equation}
where $T$ is the Thaler map \eqref{eq_Thaler} with $\gamma = \alpha^{-1} \in (1/2, 1)$, the $v_1, v_2$ are observable maps, $\mu \geq 0$ and  $\sigma \in \mathbb{R}$ are tunable parameters.
The above recursion is of the form \eqref{DS}, with $q=2$, $r_1 = r_2 = d$, $a_m(x) = -  \nabla \hat{\mathcal{R}}(x, S_n)$, $b_m^{(1)}(x) = -\mu \  \rm{diag}(x)$, and $b_m^{(2)}(x) = \sigma I$ independent of $m$. Another class of algorithms that falls within this framework is when $m$ is set to $n$, the number of training data points, in which case the empirical risk converges to the population risk as $m=n \to \infty$. 

To complete the description of MPGD, it remains to specify the choice for the observables $v_{i}(y_k^{(i)})$ in \eqref{eq_ouralg} whose rescaled version can be shown to generate stable laws $X_{\alpha,\beta}$, with $\alpha \in (1,2)$ and $\beta \in [-1,1]$, in the  limit $m \to \infty$. Following \cite{gottwald2021simulation}, we take them to be independent realizations of the observable $v^{(k)}$  described as follows. Let 
$d_\alpha = \frac{\alpha^\alpha (1-\gamma) \Gamma(1-\alpha) \cos(\alpha \pi/2)}{2^{1-\gamma}-1},$
and $\delta_0, \delta_1, \dots$ be independent copies of the random variable $\delta$ where $\mathbb{P}(\delta = \pm 1) = \frac{1}{2}(1 \pm \beta)$.
The observables $v^{(k)}$ are then defined to be: $v^{(k)} = \chi^{(k)} v \circ T^k$, where $v: [0,1] \to \mathbb{R}$ is the mean zero observable given by $v(y) =  d_\alpha^{-\gamma} (1-2^{\gamma-1})^{-\gamma} \tilde{v}(y)$,  with
$$\tilde{v}(y) = \begin{cases}
  1 & \text{if $y \leq y^*$}, \\
  (1-2^{1-\gamma})^{-1} & \text{if $y > y^*$},
\end{cases} \hspace{0.3cm}
\chi^{(k)} = \chi_{k-1} \cdots \chi_0 \in \{\pm 1 \} 
\hspace{0.1 cm} \text{ with } \ \chi_j = \begin{cases}
  1 & \text{if $T^j y \leq y^*$}, \\
  \delta_j & \text{if $T^j y > y^*$}.
\end{cases} $$ 
In particular, the random variables $\chi^{(k)}$ get updated only when the trajectory visits $(y^{*}, 1]$ and are not changed during the phase in $[0, y^*]$. 

The initial condition can be, in theory, equally well chosen using the invariant probability measure $\nu$ or the uniform Lebesgue measure. Empirically, convergence of the probability density is faster if it is drawn using $\nu$. Hence, we consider initial conditions drawn using $\nu$. Therefore, for MPGD we propagate uniformly distributed initial conditions $y_0' \in [0,1]$ under 10,000 iterations of the Thaler map and take  the initial condition as $y_0 = T^{10000} y_0'$.
Figure \ref{fig_dynamics} illustrates the dynamics of a realization of the Thaler iterates and the observable (perturbations used in MPGD), as well as the asymmetry and heavy tail nature of the distribution of the corresponding Birkhoff sum.

\begin{figure}[!t] 
\centering
\includegraphics[width=0.33\textwidth]{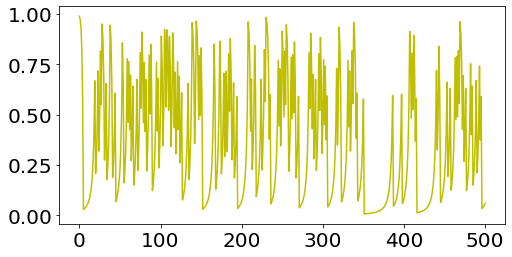}
\includegraphics[width=0.32\textwidth]{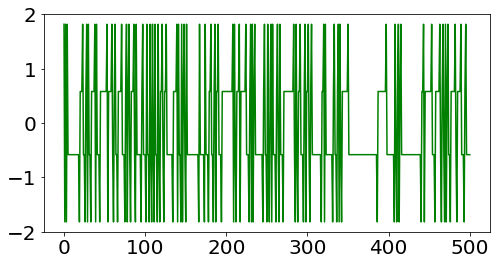}
\includegraphics[width=0.335\textwidth]{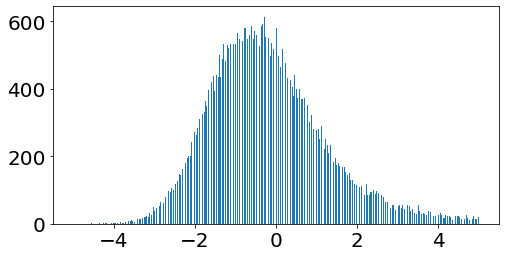}
\caption{A realization of the Thaler iterates (left) and the observable $v$ (middle) for $\gamma = 0.6$, $\beta = 0.1$. Empirical distribution of the corresponding Birkhoff sum $\frac{1}{k^{\gamma}} \sum_{j=0}^{k-1} v^{(j)}$ with $k=1000$ (right).}
\label{fig_dynamics}
\end{figure}

\section{Theoretical Analysis}

In this section, we analyze  MPGD via three different angles. We start by presenting results on superdiffusive limit (homogenization \cite{pavliotis2008multiscale}) for a class of fast-slow deterministic systems that include MPGD as a special case in Subsection \ref{subsec:limit}. We then study generalization properties of MPGD by  deriving a generalization bound for the limiting dynamics in Subsection \ref{subsec:generalization}. Lastly, we study the regularizing effects of MPGD via the lens of implicit regularization in Subsection \ref{subsec:implreg}.

\subsection{Superdiffusive Limits for the Fast-Slow Systems and MPGD} \label{subsec:limit}

Without loss of generality, we set $q=1$ in \eqref{DS} and consider the following family of  recursions:
\begin{equation}\label{DS_1var}
    \left\{ 
    \begin{aligned}
    \xn{k+1} &= \xn{k} + m^{-1}a_m(\xn{k}) + m^{-\frac{1}{\alpha}}b_m(\xn{k})v(y_k),\\
    y_{k+1} &= T(y_k),
    \end{aligned}
    \right.
\end{equation}
where $T$ is the Thaler map and $v$ is the observable constructed in Section \ref{sec:MPGD}.



The following result shows that stochastic dynamics emerges in the above family of deterministic dynamical systems in the limit $m \to \infty$. Moreover, the limiting dynamics are described by a stochastic differential equation (SDE) driven by a heavy-tailed L\'evy-stable process.

\begin{theorem}[Superdiffusive Limit -- Informal] \label{thm:main}
Let $\alpha\in (1,2)$ 
and assume that the coefficients  $a_m$, $b_m$ and the initial condition $x_0^{(m)}$ of \eqref{DS_1var} are well-behaved as $m \to \infty$. Then, under certain regularity conditions on the $a_m$, $b_m$, the process $(x_{\lfloor mt \rfloor}^{(m)})_{t\geq 0}$ converges in distribution  to the solution $(X_t)_{t \geq 0}$ of the SDE:
\[
\ds X_t= a(X_t) \ds t + b(X_t)\diamond \ds \mathrm{L}_t^{\alpha,\beta} \text{ with } X_0=x_0 \in \R^d,
\]
as $m \to \infty$, where $a = \limm a_m$, $b = \limm b_m$, $x_0 = \lim\limits_{m \to \infty} x_0^{(m)}$, $\diamond$ denotes the Marcus integration, defined in Section \ref{sec:Marcus}, and $(\mathrm{L}^{\alpha,\beta}_t)_{t\ge 0}$ is a L\'evy process with the characteristic exponent  \begin{equation}\label{eq:levy_exp}
\Psi(\xi) = |\xi|^\alpha (1-i \beta \, \mathrm{sign}(\xi) \tan(\alpha \pi/2)).
\end{equation}
\end{theorem}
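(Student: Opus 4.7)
The plan is to view the recursion \eqref{DS_1var} as a piecewise-constant Euler-type scheme driven by a deterministic chaotic path, and then to transfer a functional limit theorem for that path to the level of solutions through a continuity theorem for Marcus-type rough SDEs, following the machinery of \cite{chevyrev2016multiscale, chevyrev2020superdiffusive}. Concretely, set
\[
W^{(m)}_t := m^{-1/\alpha}\sum_{k=0}^{\lfloor mt\rfloor - 1} v(y_k), \qquad A^{(m)}_t := m^{-1}\sum_{k=0}^{\lfloor mt\rfloor - 1} a_m(\xn{k}),
\]
so that the discrete dynamics rewrite as $\xn{\lfloor mt\rfloor} = x_0^{(m)} + A^{(m)}_t + \sum_{k<\lfloor mt\rfloor} b_m(\xn{k})\,\Delta W^{(m)}_{k/m}$, i.e.\ a numerical scheme for an SDE driven by $W^{(m)}$.

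The first main step is to prove the \emph{path-level} functional limit $W^{(m)} \overset{(\mathrm{d})}{\to} \mathrm{L}^{\alpha,\beta}$ in an appropriate Skorokhod-type topology (the strong M1 or $p$-variation topology, with $p$ slightly above $\alpha$). The marginal-in-$t$ convergence is exactly the one-sided stable limit of \cite{gouezel2004central} cited in Section \ref{sec:MPGD}; what is additional here is tightness and finite-dimensional joint convergence, both obtained from decay of correlations of $T$ away from the neutral fixed point combined with the observation that laminar excursions of $y_k$ near $0$ aggregate the small $v$-increments into single macroscopic jumps, whose sign is governed by $\chi^{(k)}$ and produces the skewness parameter $\beta$ in \eqref{eq:charfunc}.

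The second main step is to enhance $W^{(m)}$ to a càdlàg rough path by appending its iterated integrals and to show convergence of this enhancement to the canonical Marcus lift of $\mathrm{L}^{\alpha,\beta}$. This is where one needs the multiplicative-noise analogue of the previous step: one verifies tightness of the level-two component and identifies its limit as the Marcus area, with no extra Stratonovich-type correction, using the precise correlation structure of $v\circ T^k$ together with the normalisation constants $d_\alpha$ and the sign chain $\chi^{(k)}$. Assuming this, the It\^o–Lyons continuity theorem for Marcus SDEs applies: since $a_m \to a$, $b_m \to b$ with enough regularity (locally Lipschitz with bounded derivatives, which is the content of the ``regularity conditions'' abbreviated in the informal statement), the solution map is continuous in the chosen topology, and composing with the convergence of the lifted driver gives
\[
\bigl(x^{(m)}_{\lfloor mt\rfloor}\bigr)_{t\geq 0} \overset{(\mathrm{d})}{\to} (X_t)_{t\geq 0}, \qquad \ds X_t = a(X_t)\ds t + b(X_t)\diamond \ds\mathrm{L}^{\alpha,\beta}_t.
\]

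The hard part will be the second step: identifying the level-two limit. For continuous Gaussian drivers this is a standard area computation, but here one must control iterated sums of a mean-zero observable along the nonuniformly expanding Thaler orbit, whose autocorrelations decay only as $k^{-(\alpha-1)}$ and therefore sit exactly at the borderline between the classical Kipnis–Varadhan regime and the stable regime. The key technical input is the analysis developed in \cite{chevyrev2020superdiffusive}: after normalisation, the area produced by the laminar–chaotic alternation collapses to the canonical Marcus lift, and no fictitious drift is generated. Once Steps 1–2 are in place, Step 3 (drift handling via $A^{(m)}_t \to \int_0^t a(X_s)\,\mathrm{d}s$) and Step 4 (passing $b_m \to b$ and $x_0^{(m)} \to x_0$) are routine continuity arguments.
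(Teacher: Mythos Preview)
Your high-level architecture---prove a functional limit for the driver $W^{(m)}$ and transfer it to solutions via continuity of the solution map---is correct and is also the paper's. But you misidentify where the work is.

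You call the level-two rough-path lift the ``hard part'' and worry about identifying the area of $W^{(m)}$, invoking the $k^{-(\alpha-1)}$ correlation decay as a borderline obstruction. This is a red herring: since $\alpha\in(1,2)$ one takes $p\in(\alpha,2)$, so the driver has finite $p$-variation with $p<2$, i.e.\ we are in the \emph{Young regime}. Iterated integrals are then canonically determined by the level-one path; no area identification or Stratonovich-type correction ever arises. The correlation exponent is relevant only to Step~1 (the stable functional limit, which the paper imports from \cite{gouezel2004central,gottwald2021simulation} together with $p$-variation tightness), not to any second-level computation. What you do not address is the genuine subtlety: the recursion \eqref{DS_1var} is a \emph{forward} (left-endpoint) scheme, and the limiting driver $\mathrm{L}^{\alpha,\beta}$ has macroscopic jumps, across which forward and Marcus integrals disagree. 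The reason the limit is nonetheless Marcus is that each jump of $W^{(m)}$ has size $O(m^{-1/\alpha})$, so $\sum_t|W^{(m)}(t)-W^{(m)}(t-)|^2=O(m^{1-2/\alpha})\to 0$; Lemma~3.6 of \cite{chevyrev2020superdiffusive} then bounds the forward--Marcus discrepancy by this quantity. The paper does not use a rough-path topology at all but the path-function metric $\alpha_{p\var}$ on driver--solution pairs, with the limiting pair $((\mathrm{L},X),\phi_b)$ carrying the path function that encodes the Marcus flow across jumps.

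The only analysis the paper adds beyond \cite{chevyrev2020superdiffusive} is precisely your ``routine'' Step~4, allowing $a_m,b_m$ to depend on $m$. It is nearly routine, but it requires a Gr\"onwall iteration in $p$-variation: one partitions $[0,1]$ into subintervals on which $\|W\|_{p\var}$ is small enough that the Young--L\'oeve estimate contracts, and propagates $\|\overline X_n-\tilde X_n\|_{p\var}\le C(b,W)\|b_n-b\|_\infty$ across the partition. This is the deterministic Theorem~\ref{thm:main_det}; Theorem~\ref{thm:main_app} then follows by passing to an a.s.-convergent subsequence of the drivers.
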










In particular, 
Theorem \ref{thm:main} implies convergence in distribution of $x^{(m)}_{\lfloor mt \rfloor}$ to $X_t$ at fixed $t$. 
It is a special case of Theorem \ref{thm:main_app}, which covers a large class of the maps $T$ and observables $v$, and extends the results in \cite{gottwald2021simulation, chevyrev2020superdiffusive} to the case where the coefficients are dependent on $m$. We provide technical details on the topology used for the convergence and the proof in Appendix \ref{app_sec:technical}-\ref{sec:homogenization}. 

Specializing Theorem \ref{thm:main} (extended to the case of $q=2$) to the MPGD recursion \eqref{eq_ouralg}, we see that the rescaled MPGD iterate process, $(x_{\lfloor mt \rfloor}^{(m)})_{t\geq 0}$, converges in distribution to the solution $(X_t)_{t \geq 0}$ of the following SDE: 
$$\ds X_t = -  \nabla \hat{\mathcal{R}}(X_t, S_n) \ds t - \mu \ {\rm{diag}}(X_t) \diamond \ds \mathrm{L}_t^{\alpha_1, \beta_1} +  \sigma \ds \tilde{\mathrm{L}}_t^{\alpha_2, \beta_2}, \ \ X_0 = \lim_{m \to \infty} x_0^{(m)},$$ 
in the limit $m \to \infty$, which is equivalent to sending the step-size, $\eta := 1/m$, in the MPGD to zero. Here  $(\mathrm{L}_t^{\alpha_1, \beta_1})_{t \geq 0}$ and $(\tilde{\mathrm{L}}_t^{\alpha_2, \beta_2})_{t \geq 0}$ are independent L\'evy processes with characteristic exponent \eqref{eq:levy_exp}, and $\beta_1$, $\beta_2 \in [-1,1]$ are the parameters appearing in the construction of the observables $v_1$ and $v_2$ respectively. Therefore, we see that  MPGD converges to a gradient flow driven by  heavy-tailed L\'evy-stable processes in the considered scaling limit. The driving processes contain both additive and multiplicative components. The multiplicative component can be interpreted as a dynamical version of the weight decay, whereas the additive component can be viewed as  gradient noise \cite{neelakantan2015adding}.










\subsection{Generalization Properties of MPGD} \label{subsec:generalization}

In this section, we shift our focus to the generalization properties of MPGD. Our main roadmap will be to consider the limiting SDE arising from Theorem~\ref{thm:main} (with $\beta = 0$) and relate it to the recently developed generalization bounds for heavy-tailed random processes \cite{csimcsekli2020hausdorff,hodgkinson2021generalization}.

For mathematical convenience, we consider the following special case of \eqref{DS}\footnote{We believe a similar result would hold for \eqref{DS_1var} without symmetrization, which we leave for future work.}:
\begin{equation}\label{DS_sym}
    \left\{ 
    \begin{aligned}
    \xn{k+1} &= \xn{k} + m^{-1}a_m(\xn{k}) + m^{-\frac{1}{\alpha}}b_m(\xn{k})\left(v(y^{(1)}_k)- v(y^{(2)}_k) \right) ,\\
    y^{(1)}_{k+1} &= T(y^{(1)}_k), \quad
    y^{(2)}_{k+1} = T(y^{(2)}_k)
    \end{aligned}
    \right.
\end{equation}
In the light of Theorem~\ref{thm:main}, the recursion~\eqref{DS_sym} converges weakly to the following SDE: 
\begin{equation}\label{eq:sde}
\ds X_t= a(X_t)\ds t + b(X_t)\diamond \ds \mathrm{L}_t^{\alpha}, 
\end{equation}
where $\mathrm{L}^\alpha_t$ denotes the \emph{symmetric} $\alpha$-stable process in $\R^d$ (i.e., $\beta =0$).

Following the previous work \cite{csimcsekli2020hausdorff,hodgkinson2021generalization}, we are interested in bounding the worst-case generalization error over the trajectories of \eqref{eq:sde}. More precisely, let $(X_t)_{0\leq t \leq 1}$ denote the solution of \eqref{eq:sde}\footnote{Here, the range $[0,1]$ is arbitrary and could be replaced with any range $[0,T]$ for $0<T<\infty$.}, and define the set $\mathcal{X}$ so that it contains all the points visited by the trajectory:
\begin{align}
\mathcal{X} := \{x \in \mathbb{R}^d: \exists t \in [0,1], X_t = x\}.    
\end{align}
Then, our goal is to analyze $\sup_{x\in \mathcal{X}}|\hat{\mathcal{R}}(x,S_n) - \mathcal{R}(x)|$. Since we are mainly interested in the case when $a(x) = -\nabla \hat{\mathcal{R}}(x,S_n)$, the SDE \eqref{eq:sde} and hence the trajectory $\mathcal{X}$ will depend on the dataset $S_n$. Therefore, in the generalization bound, we will need to control the statistical dependence of $\mathcal{X}$ on $S_n$, through the notion of $\rho$-mutual information, defined as follows. Let $X,Y$ be two random elements, let $\mathbb{P}_{X,Y}$ denote their joint distribution, and let $\mathbb{P}_{X},\mathbb{P}_{Y}$ denote their respective marginal distributions. Then the $\rho$-mutual information between $X$ and $Y$ is defined as: $I_{\rho}(X,Y)=D_{\rho}(\mathbb{P}_{X,Y}\Vert\mathbb{P}_{X}\otimes\mathbb{P}_{Y})$, where $D_\rho$ is the $\rho$-Renyi divergence:\footnote{As $\rho \to 1$, $D_{\rho}$ tends to the Kullback-Leibler divergence.}
    $D_{\rho}(\mu,\nu)=\frac{1}{\rho-1}\log\mathbb{E}\bigg[\frac{\mathrm{d} \mu}{\mathrm{d} \lambda}(Z)^\rho \frac{\mathrm{d} \nu}{\mathrm{d} \lambda}(Z)^{1-\rho}\bigg]$,
and $\frac{\mathrm{d} \mu}{\mathrm{d} \lambda}$ denotes the Radon-Nikodym derivative. 
%
%
%
%

Finally, we require two technical regularity assumptions (Assumptions \ref{assmp:traj_regularity} and \ref{assmp:haus_dim}, given in Appendix \ref{sec:proofgen}) for our generalization bound. Informally, Assumption \ref{assmp:traj_regularity} is a topological regularity condition over the trajectory $\mathcal{X}$, whereas Assumption~\ref{assmp:haus_dim} is a statistical regularity assumption and imposes that in the neighborhood of a local minimum $x^\star$ of $\hat{\mathcal{R}}(x,S_n)$, the statistical behavior of the process $X_t$ can be approximated by the process solving the SDE
    $\mathrm{d} \tilde{X}_t = b(x^\star) \diamond \ds \mathrm{L}_t^{\alpha}$. This requires $b$ to be regular in the neighborhood of $x^\star$ and further imposes that $a(x^\star) = 0$, which is natural when $a(x) = -\nabla \hat{\mathcal{R}}(x,S_n)$. 
%
We now ready to present our generalization bound. 

\begin{theorem}[Generalization bound]
\label{thm:gen}
Assume that Assumptions \ref{assmp:traj_regularity} and \ref{assmp:haus_dim} given in Section~\ref{sec:proofgen} hold, and that $\ell$ is bounded by $B > 0$ and $L$-Lipschitz continuous. There exists a constant $K_1 > 0$ such that with probability at least $1 - \delta$, 
\begin{equation}
\label{eq:GenBoundProb}
\sup\nolimits_{x \in \mathcal{X}} |\hat{\mathcal{R}}(x, S_n) - \mathcal{R}(x)|  \leq K_1 \max\{B,L\} \left(\sqrt{\frac{   \alpha}{n}} + \sqrt{\frac{\log(1/\delta) + I_\infty(S_n, \mathcal{X})}{n}}\right),
\end{equation}
where $I_\infty(X,Y) := \lim_{\rho \to \infty} I_\rho(X,Y)$. Furthermore, there exists 
$K_2 > 0$ such that
\begin{equation}
\label{eq:GenBoundExp}
\mathbb{E}\sup\nolimits_{x \in \mathcal{X}} |\hat{\mathcal{R}}(x, S_n) - \mathcal{R}(x)| \leq K_2 \max\{B,L\} \left(\sqrt{\frac{\alpha }{n}} + \sqrt{\frac{I_1(S_n,\mathcal{X})}{n}}\right).
\end{equation}
\end{theorem}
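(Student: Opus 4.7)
The plan is to follow the framework developed in \cite{hodgkinson2021generalization, csimcsekli2020hausdorff}, which bounds the worst-case generalization error of a random trajectory by combining (i) a uniform convergence bound governed by an intrinsic dimension of $\mathcal{X}$, (ii) a dimension estimate tailored to the $\alpha$-stable driving noise, and (iii) a mutual-information based change-of-measure to decouple $\mathcal{X}$ from $S_n$. The data enters the SDE \eqref{eq:sde} only through $a(x) = -\nabla \hat{\mathcal{R}}(x,S_n)$, while the Marcus-driving process $\mathrm{L}^\alpha_t$ is independent of $S_n$, so this decoupling strategy is the natural one.

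First, I would apply a standard covering/chaining argument (using boundedness $\|\ell\|_\infty \le B$ and the $L$-Lipschitz property) to obtain, for any \emph{fixed} set $A \subset \R^d$, a bound of the form $\sup_{x \in A} |\hat{\mathcal{R}}(x,S_n) - \mathcal{R}(x)| \lesssim \max\{B,L\} (\sqrt{\overline{\dim}_M(A)/n} + \sqrt{\log(1/\delta)/n})$, where $\overline{\dim}_M$ is the upper Minkowski dimension. Next, I would show that $\overline{\dim}_M(\mathcal{X}) \le \alpha$ almost surely. Locally around a critical point $x^\star$ where $a(x^\star) = 0$, Assumption \ref{assmp:haus_dim} allows approximation of $X_t$ by $\tilde X_t$ solving $\ds \tilde X_t = b(x^\star) \diamond \ds \mathrm{L}^\alpha_t$, i.e., a linear image of a symmetric $\alpha$-stable Lévy process whose range has Hausdorff and upper Minkowski dimension at most $\alpha$ by the classical Blumenthal--Getoor theorem. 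Assumption \ref{assmp:traj_regularity} (topological regularity of the trajectory on $[0,1]$) then patches these local estimates into a global bound via a finite sub-cover argument, using the fact that upper Minkowski dimension is stable under finite unions.

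Finally, to transfer the fixed-$A$ bound to the data-dependent $\mathcal{X}$, I would use the standard Rényi change-of-measure inequalities. For the high-probability statement \eqref{eq:GenBoundProb}, the $\infty$-Rényi divergence controls the essential supremum of the likelihood ratio between $\mathbb{P}_{S_n,\mathcal{X}}$ and $\mathbb{P}_{S_n} \otimes \mathbb{P}_{\mathcal{X}}$, so a tail bound under the product measure transfers to the joint at the price of an additive $I_\infty(S_n, \mathcal{X})$ term under the square root. For the expectation bound \eqref{eq:GenBoundExp}, the Donsker--Varadhan variational representation of the KL divergence gives the analogous transfer with only $I_1(S_n, \mathcal{X})$.

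The main obstacle I expect is Step 2: controlling $\overline{\dim}_M(\mathcal{X})$ for a Marcus SDE with state-dependent coefficients. For pure $\alpha$-stable motion the bound $\overline{\dim}_M \le \alpha$ is classical, but extending it to solutions of \eqref{eq:sde} requires Assumption \ref{assmp:haus_dim} to anchor the local behavior and Assumption \ref{assmp:traj_regularity} to globalize it; some care is needed because Marcus flows introduce jumps that must be handled when counting covers. Once this dimension estimate is in place, the recovery of the characteristic $\sqrt{\alpha/n}$ rate — in place of the ambient $\sqrt{d/n}$ — follows directly, and this is precisely the point at which the heavy-tailed limit of MPGD pays off in the generalization bound.
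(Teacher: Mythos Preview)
Your overarching strategy---establish an intrinsic-dimension bound $\le \alpha$ for $\mathcal{X}$ and then feed it into the covering/mutual-information framework of \cite{hodgkinson2021generalization}---is exactly the paper's plan, and your Step~3 (R\'enyi/Donsker--Varadhan decoupling) is already packaged inside \cite[Theorem~1, Corollary~1]{hodgkinson2021generalization}, which the paper simply cites rather than reproves.

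Where you diverge is the dimension estimate, and here your version has a real gap. The paper bounds the \emph{Hausdorff} dimension of $\mathcal{X}$, not the upper Minkowski dimension, and it does so globally rather than by local approximation near $x^\star$ followed by patching. Concretely, Assumption~\ref{assmp:haus_dim} is formulated as a condition on the \emph{symbol} $q(x,\xi)$ of the Feller generator of $(X_t)$: it requires $q$ to be a perturbation (controlled in $L^1$ in $x$, with subcritical growth in $\xi$) of the $x$-independent symbol $\psi_{X^\star}(\xi)=|\xi^T b(x^\star)|^\alpha$. Together with the ellipticity coming from positive-definiteness of $b(x^\star)$, this places the \emph{full} process within the scope of \cite[Theorem~4]{schilling1998feller}, which yields $\dim_{\mathrm H}(\mathcal X)\le \alpha$ a.s.\ in one stroke---no Blumenthal--Getoor localization, no finite sub-cover. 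Assumption~\ref{assmp:traj_regularity} is then not a patching device at all but an Ahlfors-type lower-regularity condition ($\mu(B_r(x))\gtrsim r^\alpha$ on $\mathcal X$): it is precisely the extra hypothesis that \cite{hodgkinson2021generalization} needs to convert a Hausdorff-dimension bound into covering-number control, since Hausdorff dimension alone does not bound covering numbers. Your local-to-global route for $\overline{\dim}_M$ would, if it worked, make Assumption~\ref{assmp:traj_regularity} superfluous---but as sketched it does not go through: the trajectory over $[0,1]$ need not remain near the single point $x^\star$, the approximation $X_t\approx \tilde X_t$ does not a priori transfer Minkowski dimension, and Assumption~\ref{assmp:traj_regularity} as stated provides no finite-cover structure to patch with. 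The symbol-based route via Schilling is exactly what sidesteps this difficulty.
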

This result shows that the generalization error is mainly bounded by two terms: the tail exponent of the limiting L\'{e}vy process $\mathrm{L}^\alpha_t$, and the statistical dependence of $\mathcal{X}$ on $S_n$. The bound suggests that, provided the mutual information between $\mathcal{X}$ and $S_n$ is fixed, the generalization error can be reduced by using a smaller $\alpha$ in MPGD. However, it has been illustrated that a smaller $\alpha$ can induce significant error on $\hat{\mathcal{R}}(x,S_n)$ \cite{simsekli2020fractional,camuto2021asymmetric}, hence a reasonable value of $\alpha$ should be chosen to balance both the empirical risk and the generalization error. The algorithm parameters $\mu, \sigma$ interact with the bound via the constant terms $K_1$, $K_2$  and the mutual information terms $I_1$, $I_\infty$. Unfortunately, their dependence is quite implicit and we are not able to provide quantitative estimations.

Finally, the main difference between Theorem~\ref{thm:gen} and \cite{csimcsekli2020hausdorff,hodgkinson2021generalization} is that, in our bounds we have explicit access to the characteristic exponent of $\mathrm{L}_t^\alpha$ since we construct it manually through our dynamics; whereas in the prior work generic processes were used as approximations for SGD, hence their results are not as explicit.

\subsection{Implicit Regularization of MPGD}
\label{subsec:implreg}
To understand MPGD and its connection to the vanilla GD \eqref{eqn:sgd} better, we provide additional analysis for the behavior of loss functions optimized under the recursion \eqref{eq_ouralg} via the lens of implicit regularization \cite{Mah12, lim2021noisy}. By this, we mean regularization imposed implicitly by the  learning strategy, without explicitly modifying the loss. We shall achieve this by deriving an appropriate explicit regularizer through a perturbation analysis in the weak perturbation regime. 

To this end, we work in the regime where the parameters $\mu, \sigma$ are small, keeping $m$ fixed. We set $\mu = \mu_0 \epsilon$ and $\sigma = \sigma_0 \epsilon$, where $\epsilon > 0$ is a small parameter, and analyze the loss optimized using the recursion \eqref{eq_ouralg} in the small $\epsilon$ regime. In the sequel, we let  $\overline{x}_k^{(m)}$ denote the states of the unperturbed GD, satisfying the recursion $\overline{x}_{k+1}^{(m)} = \overline{x}_k^{(m)} - \frac{1}{m} \hat{\mathcal{R}}(\overline{x}_k^{(m)}, S_n)$, for $k = 0,1, \dots$.
To simplify notation, we shall denote $  \hat{\mathcal{R}}(x) := \hat{\mathcal{R}}(x, S_n)$.

The following result relates the loss function, averaged over realizations of the
injected perturbations, evolved under training with MPGD to that of unperturbed GD in the weak perturbation regime.

\begin{theorem}[Implicit regularization] \label{thm_implreg}
Let $m > 0$, $k \in \mathbb{N}$, and $S_n$ be given. Assume that $r_1 = 1$ and $r_2 = d$ in Eq. \eqref{eq_ouralg}, and $\mathbb{E}[v_1(y^{(1)}_k)] = \mathbb{E}[v_2(y^{(2)}_k)] = 0  $ for all $k$. Then, for a scalar-valued loss function $\hat{\mathcal{R}}$ with $\nabla \hat{\mathcal{R}}$ having Lipschitz continuous partial derivatives in each coordinate up to order three (inclusive),
\begin{align}
    \mathbb{E}[\hat{\mathcal{R}}(x_k^{(m)})] &=  \hat{\mathcal{R}}(\overline{x}_k^{(m)})  + \frac{\epsilon^2}{2} \left( tr\left( C_k^{(m)} \nabla^2 \hat{\mathcal{R}}(\overline{x}_k^{(m)}) \right)  -   \nabla \hat{\mathcal{R}}(\overline{x}_k^{(m)})^T  \lambda_k^{(m)} \right) +  \mathcal{O}(\epsilon^3), \label{thm3_eqn}
\end{align}
as $\epsilon \to 0$, where the expectation is with respect to the realization of the $y^{(i)}$, the $\lambda_k^{(m)}$ are the vector with the $l$th component:
$$ [\lambda_k^{(m)}]^l = \frac{1}{m} \sum_{i=1}^k \sum_{j = 1}^d  [\Phi_{i}^{(m)}]^{lj} tr\left( C_{i-1}^{(m)} \nabla^2 [\nabla \hat{\mathcal{R}}(\overline{x}_{i-1}^{(m)}) ]^j  \right), $$
the $\Phi_{i}^{(m)} := \prod_{j=i}^{k-1} (I-\frac{1}{m} \nabla^2  \hat{\mathcal{R}}(\overline{x}_{j}^{(m)}))$, with the empty product taken to be the identity,
and the $C_k^{(m)}$ are covariance matrices with the $(p,q)$-entry of $\sum_{i_1,i_2 =1}^k \sum_{r,s=1}^d [\Phi_{i_1}^{(m)}]^{pr} [\Phi_{i_2}^{(m)}]^{qs} \theta_{i_1, i_2, r, s}^{(m)}$, where
\begin{align*}
    \theta_{i_1, i_2, r, s}^{(m)} &= \frac{\mu_0^2}{m^{2/\alpha_1}} \mathbb{E}[v_1(y_{i_1 - 1}^{(1)}) v_1(y_{i_2 - 1}^{(1)})] \cdot [x_{i_1 - 1}^{(m)}]^r [x_{i_2 - 1}^{(m)}]^s + \frac{\sigma_0^2}{m^{2/\alpha_2}} \mathbb{E}[[v_2]^r(y_{i_1 - 1}^{(2)}) [v_2]^s(y_{i_2 - 1}^{(2)})]  . 
\end{align*} 
\end{theorem}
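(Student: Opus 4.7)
The plan is a regular perturbation expansion in $\epsilon$. I would write $x_k^{(m)} = \overline{x}_k^{(m)} + \epsilon u_k^{(m)} + \epsilon^2 w_k^{(m)} + O(\epsilon^3)$, substitute into \eqref{eq_ouralg}, Taylor-expand $\nabla \hat{\mathcal{R}}(x_k^{(m)})$ around $\overline{x}_k^{(m)}$ to third order, and match powers of $\epsilon$. The order-$\epsilon^0$ equation recovers the unperturbed GD recursion for $\overline{x}_k^{(m)}$. Setting $A_k^{(m)} := I - \tfrac{1}{m}\nabla^2 \hat{\mathcal{R}}(\overline{x}_k^{(m)})$, the order-$\epsilon^1$ equation is the affine recursion
\[ u_{k+1}^{(m)} = A_k^{(m)} u_k^{(m)} + \xi_k^{(m)}, \qquad \xi_k^{(m)} := -\tfrac{\mu_0}{m^{1/\alpha_1}} v_1(y_k^{(1)})\overline{x}_k^{(m)} + \tfrac{\sigma_0}{m^{1/\alpha_2}} v_2(y_k^{(2)}), \]
with $u_0^{(m)} = 0$. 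Unrolling yields the closed form $u_k^{(m)} = \sum_{i=1}^k \Phi_i^{(m)} \xi_{i-1}^{(m)}$, with $\Phi_i^{(m)}$ exactly as defined in the theorem. Since $\mathbb{E}[v_1] = \mathbb{E}[v_2] = 0$ and the two drivers $y^{(1)}, y^{(2)}$ are independent, one immediately obtains $\mathbb{E}[u_k^{(m)}] = 0$, and the mixed $v_1 v_2$ cross terms in $\mathbb{E}[\xi_{i_1-1}^{(m)}(\xi_{i_2-1}^{(m)})^T]$ vanish, leaving exactly the stated $\theta$ tensor. Hence $C_k^{(m)} = \mathbb{E}[u_k^{(m)} (u_k^{(m)})^T]$ matches the theorem's formula in coordinates.

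At order $\epsilon^2$, the recursion reads
\[ w_{k+1}^{(m)} = A_k^{(m)} w_k^{(m)} - \tfrac{1}{2m}\nabla^3 \hat{\mathcal{R}}(\overline{x}_k^{(m)})(u_k^{(m)}, u_k^{(m)}) - \tfrac{\mu_0}{m^{1/\alpha_1}} v_1(y_k^{(1)}) u_k^{(m)}, \]
with $w_0^{(m)} = 0$. The identity $[\nabla^3 \hat{\mathcal{R}}(\overline{x})(u,u)]^l = u^T \nabla^2 [\nabla \hat{\mathcal{R}}(\overline{x})]^l u$, combined with $\mathbb{E}[u_i^T M u_i] = tr(M C_i^{(m)})$ for deterministic $M$, converts the second term, after unrolling and re-indexing $i \mapsto i+1$, into exactly $-\tfrac{1}{2}\lambda_k^{(m)}$ in expectation. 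A second-order Taylor expansion of the loss at $\overline{x}_k^{(m)}$,
\[ \hat{\mathcal{R}}(x_k^{(m)}) = \hat{\mathcal{R}}(\overline{x}_k^{(m)}) + \nabla \hat{\mathcal{R}}(\overline{x}_k^{(m)})^T(\epsilon u_k^{(m)} + \epsilon^2 w_k^{(m)}) + \tfrac{\epsilon^2}{2}(u_k^{(m)})^T \nabla^2 \hat{\mathcal{R}}(\overline{x}_k^{(m)}) u_k^{(m)} + O(\epsilon^3), \]
together with $\mathbb{E}[u_k^{(m)}] = 0$ and $\mathbb{E}[w_k^{(m)}] = -\tfrac{1}{2}\lambda_k^{(m)}$, then delivers the claimed identity upon taking expectations: the $\epsilon^1$ term drops, and the two surviving $\epsilon^2$ contributions combine into $\tfrac{\epsilon^2}{2}(tr(C_k^{(m)}\nabla^2 \hat{\mathcal{R}}) - \nabla \hat{\mathcal{R}}^T \lambda_k^{(m)})$.

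The hard part is the auxiliary term $-\tfrac{\mu_0}{m^{1/\alpha_1}} v_1(y_k^{(1)}) u_k^{(m)}$ in the $w$-recursion. Because the Thaler-map observables $v_1 \circ T^j$ decorrelate only algebraically, $\mathbb{E}[v_1(y_k^{(1)}) u_k^{(m)}]$ is a priori nonzero and of the same order as the $\nabla^3 \hat{\mathcal{R}}$ contribution; to close the proof one must show, under the theorem's hypotheses (in particular $r_1 = 1$ and the zero-mean structure of $v_1$), that its net contribution is absorbed into $\lambda_k^{(m)}$ or vanishes via cancellation between summands. The remaining technical points are comparatively routine: the $O(\epsilon^3)$ Taylor remainder is controlled by the assumed Lipschitz-continuity of the third partials of $\hat{\mathcal{R}}$ together with uniform-in-realizations boundedness of $u_k^{(m)}$ and $w_k^{(m)}$ (for $k$ and $m$ fixed), which follows from boundedness of the observables $v_1, v_2$ and the explicit linear formulas above.
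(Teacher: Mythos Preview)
Your approach is the paper's: the same perturbation hierarchy (your $u_k,w_k$ are the paper's $\phi_k^{(m)},\varphi_k^{(m)}$), the same unrolling via the products $\Phi_i^{(m)}$, and the same second-order Taylor expansion of $\hat{\mathcal{R}}$ about $\overline{x}_k^{(m)}$ followed by taking expectations (the paper states that expansion as a separate lemma). One cosmetic difference: you evaluate the multiplicative-noise part of $\xi_k$ at $\overline{x}_k^{(m)}$, while the paper and the theorem's $\theta$-formula keep the perturbed $x_{i-1}^{(m)}$; the two agree to the order tracked, so this is harmless.

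The step you call ``the hard part'' is exactly where the paper is briefest. In computing $\mathbb{E}[\varphi_k^{(m)}]$ the paper drops the cross term
\[
\sum_{l}\frac{\partial g_{i-1}^{(m)}}{\partial x^l}(\overline{x}_{i-1}^{(m)})[\phi_{i-1}^{(m)}]^l \;=\; -\frac{\mu_0}{m^{1/\alpha_1}}\,v_1(y_{i-1}^{(1)})\,\phi_{i-1}^{(m)}
\]
in one line, with the sole justification ``using the fact that $\mathbb{E}\,v_1(y_{i-1}^{(1)})=0$ for all $i$''. As you correctly point out, that argument actually requires $v_1(y_{i-1}^{(1)})$ to be \emph{uncorrelated} with $\phi_{i-1}^{(m)}$, and since $\phi_{i-1}^{(m)}$ is built from $v_1(y_0^{(1)}),\dots,v_1(y_{i-2}^{(1)})$ along the same (algebraically decorrelating) Thaler trajectory, the zero-mean hypothesis alone does not deliver this. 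So the paper does not resolve your concern; it simply does not flag it. In short, you have reconstructed the paper's proof and, in addition, located a step that is underjustified under the theorem's stated hypotheses: the stated $\lambda_k^{(m)}$ is what one gets if that cross term is declared to vanish, but neither your sketch nor the paper supplies a mechanism by which it does for genuinely correlated $v_1\circ T^j$.
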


Theorem \ref{thm_implreg} implies that the loss function optimized under MPGD at a given iteration and $m$  is, on average, approximately equivalent to a regularized objective functional. Moreover,  $\frac{\nabla \hat{\mathcal{R}}(\overline{x}_k^{(m)})^T  v_k^{(m)}}{tr\left( C_k^{(m)} \nabla^2 \hat{\mathcal{R}}(\overline{x}_k^{(m)}) \right)} \leq \frac{C}{m}$ for some constant $C$ independent of $m$, suggesting that the trace term is the dominant explicit regularizer for large $m$. This explicit regularizer is solely determined by the discrete-time flow generated by the Jacobians $I-\frac{1}{m} \nabla^2  \hat{\mathcal{R}}(\overline{x}_{j}^{(m)})$, the covariances $C_k^{(m)}$, and the Hessian of the loss function, all evaluated along the dynamics of the unperturbed GD. We can therefore expect the use of the perturbations in MPGD as a regularization mechanism should reduce the  Hessian of the loss function according to the perturbation levels ($\mu_0, \sigma_0$) and the correlations $\frac{1}{m^{2/\alpha_1}} \mathbb{E}[v_1(y_{i_1 - 1}^{(1)}) v_1(y_{i_2 - 1}^{(1)})]$, $\frac{1}{m^{2/\alpha_2}} \mathbb{E}[v_2(y_{i_1 - 1}^{(2)}) v_2(y_{i_2 - 1}^{(2)})]$. Reducing the Hessian of the loss function can lead to flatter minima in the loss landscape, which is widely believed to associate with better generalization in deep learning \cite{keskar2016large, jiang2019fantastic}. 

Lastly, we observe that in the weak perturbation regime and for non-convex losses, MPGD is more beneficial than GD in the sense that it reduces the Hessian of the loss, thereby promoting flatter minima. Hence, in order to observe the benefits of our scheme, we suspect that the optimization problem should be non-convex and local minima should have different curvatures.



\section{Empirical Results} \label{sect_empiricalresults}

In this section, we  illustrate the advantages of MPGD  compared to other schemes on the tasks of (1) minimizing the widening valley loss, (2) regression on the Airfoil Self-Noise Dataset, and (3) classification on CIFAR-10. We also provide additional results and details in Appendix \ref{app_sec:addexps}.

\subsection{Minimizing the Widening Valley Loss} 
We consider the problem of minimizing the widening valley loss, given by $\ell(u,v) = v^2 \|u\|^2/2$.  The gradient of $l$ is given by $\nabla \ell(u,v) = (v^2 u, \|u\|^2 v)$, and the trace of the Hessian is $d v^2 + \|u\|^2$, where $d$ is the dimension of $u$. The trace of the Hessian measures the flatness of the minima, which is monotonously changing in this case.   The loss has a valley of minima with zero loss for all $(u,v)$ with $v=0$. The smaller the norm of $u$, the flatter the minimum. Vanilla GD gets stuck when it first enters this valley, and the intuition is that injecting suitable perturbations should help convergence to a flat (with small $\|u\|$) part of the valley. All $(u,v)$ with $v=0$ are minima, but we also need $\|u\|$ to be minimized in order to minimize the trace of the Hessian. In high dimension (large $d$), the GD path is biased towards making $v$ small and not optimizing $u$ since the direction along $v$ is the most curved.

For the experiments, we start optimizing from the point $(u_0,0)$, where $u_0 \sim 5 \cdot \mathcal{U}(d)$ with $d=10$, and use the learning rate $\eta = 0.01$.  We study and compare the behavior of the following  schemes: (i) baseline (vanilla GD), (ii) GD with uncorrelated Gaussian noise injection instead, and (iii) MPGD. Figure~\ref{fig_wideningvalleyloss} demonstrates that MPGD can lead to successful optimization of the widening loss whereas the baseline GD and GD with Gaussian perturbations lead to poor solutions. This is in agreement with our analysis of implicit regularization for MPGD, showing that the injected perturbations effectively favor small trace of the loss Hessian, thereby biasing the solution to flatter region of the loss landscape.

\clearpage 

\begin{figure}[!t] 
\hspace{-0.3cm}
\includegraphics[width=0.3\textwidth]{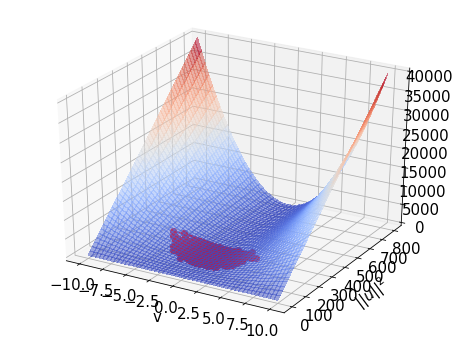}
\hspace{-0.2cm} \includegraphics[width=0.36\textwidth]{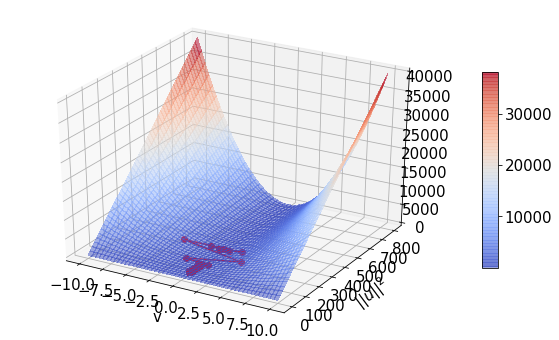}
\includegraphics[width=0.34\textwidth]{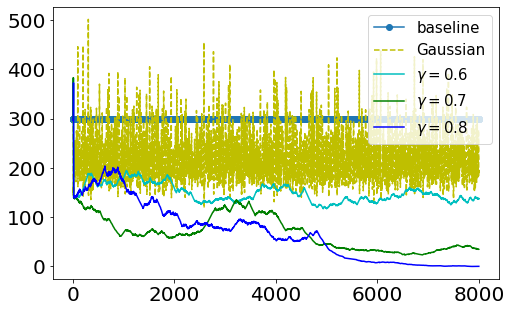}
\caption{Evolution of GD with Gaussian perturbations (left plot) vs. that of MPGD with $\gamma = 0.7$, $\beta = 0.5$ (middle plot), using $\mu = 0.02$, $\sigma = 0.05$, and $\eta = 0.01$ ($m = 100$). Here we see that MPGD leads to successful optimization of the widening valley loss whereas that with Gaussian perturbations fails to converge. Moreover, MPGD effectively reduces the trace of loss Hessian (see right plot), steering the GD iterates to flatter region of the loss landscape. }
\label{fig_wideningvalleyloss}
\end{figure}

\subsection{Airfoil Self-Noise Prediction}

\begin{wraptable}{r}{0.5\textwidth} 
    \vspace{-1.1cm}
	\caption{Shallow neural nets trained on the airfoil data set. The results in parenthesis are achieved with the variant \eqref{DS_sym}. 
	All the results are averaged over 5 models trained with different seed values. \vspace{0.3cm}}
	\label{tab:airfoil}
    \centering
    \scalebox{0.8}
    {
\begin{tabular}{@{}lcccc@{}}
\toprule
\multicolumn{1}{c}{Scheme} & test RMSE & RMSE gap \\ \midrule
Baseline     &  0.4309   &  0.2411   \\
Gaussian          &  0.4279 &  0.2354 \\
MPGD, $\gamma = 0.55$           &  0.3916 (0.3865)  &   0.2256  (0.2305)  \\
MPGD, $\gamma = 0.6$        &  {\bf  0.3810} (0.4092)   &  0.2298 ({\bf 0.2206}) \\
MPGD, $\gamma = 0.65$         &  0.3829 (0.3891)  &    0.2407 (0.2307) \\
MPGD, $\gamma = 0.7$    &   0.4600 (0.3754)  &  0.2315 (0.2311)    \\
\bottomrule 
\end{tabular}
}
\end{wraptable}

We consider the Airfoil Self-Noise Dataset \cite{Dua:2019} from the UCI repository. It comprises of differently sized airfoils at various wind tunnel speeds and angles of attack. This is a regression problem aiming to predict scaled sound pressure level, in decibels, of the airfoil based on the the  features: frequency, angle of attack, chord length,  free-stream velocity, and suction side displacement thickness. We use 1202 samples for training and 301 samples for testing. 

For the training, we use a fully connected shallow neural network of width 16 with  ReLU activation and train for 3000 epochs with the learning rate $\eta = 0.1$, using  mean square error (MSE) as the loss and choosing $\beta = 0.5$.  Table \ref{tab:airfoil} reports the average root MSE (RMSE) and the RMSE gap (defined as test RMSE - train RMSE) evaluated for models that are trained with 5 different seed values for this task. We can see that MPGD leads to both lower test RMSE and RMSE gap when compared with vanilla GD (baseline) and GD with uncorrelated Gaussian perturbations (see the results not in parenthesis in Table \ref{tab:airfoil};  here  $\mu = 0.01$, $\sigma = 0.02$). Using the form of the perturbations in \eqref{DS_sym} instead can also give lower  RMSE gap (see the results in parenthesis in Table \ref{tab:airfoil}; here  $\sigma = \mu = 0.01$). Overall these results support our generalization theory for MPGD. Lastly, we note that using larger values of $\gamma$ naively does not guarantee better test performance: one has to fine tune the parameters $\beta$, $\mu$, $\sigma$, $\eta$ appropriately to achieve favorable trade-off between training stability and test performance.

\subsection{CIFAR-10 Classification} 
We consider training a ResNet-18 classifier \cite{he2016deep} on the CIFAR-10 data set \cite{krizhevsky2009learning}. 
We follow the setup used in \cite{geiping2021stochastic}, where a ResNet-18 is trained using batch size of 50K (the entire training dataset). As in \cite{geiping2021stochastic}, we consider a standard ResNet-18 with the parameters in the linear layer randomly initialized and the parameters in the batch normalization initialized with mean zero and unit variance, except for the last in each residual branch which is initialized to zero. The default random CIFAR-10 data ordering is kept as is, and every image is randomly augmented by horizontal flips and random crops after padding by 4 pixels. 

Using the setup of \cite{geiping2021stochastic}, the reference mini-batch SGD  is trained using a batch size of 128 (sampling without replacement), Nesterov momentum of 0.9 and weight decay of 0.0005. The learning rate is warmed up from 0.0 to 0.1 over the first 5 epochs and then reduced via cosine annealing to 0 over the course of training for 300 epochs (resulting in 390x300=117,000 update steps). The  validation accuracy obtained in \cite{geiping2021stochastic} is 95.7\%. For the full-batch GDs, we replace the mini-batch updates by full batches and accumulate the gradients over all mini-batches.
For the Gaussian scheme and MPGD, we inject the perturbations into the GD recursion for  parameters of the input layer of ResNet-18, choosing $\mu = 0.03$, $\sigma = 0.01$ and $\beta = 0.5$ (for MPGD). In our experiments, we fix the learning rate to 0.1 throughout training, and do not add momentum and weight decay. This is in constrast to \cite{geiping2021stochastic}, where the additional tricks applied in the mini-batch SGD are also used for the full-batch GD, leading to higher test accuracies (see Table 2 in \cite{geiping2021stochastic}).

Table \ref{tab:cifar10-resnet18} shows that MPGD can lead to better test performance and lower accuracy gap when compared to the baseline (full batch GD) and Gaussian noise-perturbed GD. Using the perturbations in \eqref{DS_sym} instead can also achieve lower  accuracy gap. Therefore, adding perturbations of MPGD can be potentially a useful trick to improve training of deep architectures on benchmark data sets. Lastly, we note that additional tricks, such as gradient clipping and tuning of optimization hyperparameters, can be applied to improve our test accuracies and close the gap to results obtained with the reference mini-batch SGD. However, our main focus here is not on competitive performance, but rather on demonstrating the effects of the perturbations in MPGD when compared to full batch GDs  and Gaussian noise-perturbed full-batch GD to support our theory.

\begin{table}[!t]
	\caption{ResNet-18 trained on CIFAR-10 for 1000 epochs. Here, accuracy gap =  training accuracy - validation accuracy. The results in parenthesis are achieved with the variant of MPGD \eqref{DS_sym}. All the results are averaged over 5 models trained with different seed values.  }
	\label{tab:cifar10-resnet18}
\centering
\scalebox{0.8}{
\begin{tabular}{@{}lccc@{}}
\toprule
\multicolumn{1}{c}{Scheme}   & val. accuracy in \% & accuracy gap in \% \\ \midrule
Baseline full batch GD                    &  73.10 & 8.99   \\
Gaussian                         & 70.58 &              8.53   \\
MPGD, $\gamma = 0.55$      &  {\bf 75.97} (74.51) &  6.08 (6.58)               \\
MPGD, $\gamma = 0.6$   &  75.78 (74.60) &    {\bf  5.64} (6.15)        \\
MPGD, $\gamma = 0.65$                       &  74.39 (72.96) &   6.41 (6.85)           \\
MPGD, $\gamma = 0.7$  &  73.72 (73.27) &  6.69 (6.04)          \\ 
\bottomrule
\end{tabular}
}
\end{table}

Lastly, we remark that we did not tune the step-size for any of the algorithms. Since our generalization bound applies to the asymptotic SDE, which is obtained when the step-size goes to zero, in order to stay close to the theory, we chose a small enough step-size to be both not far from the continuous dynamics, and large-enough so that the algorithm converges in a reasonable amount of time. That being said, we have tried a range of step-sizes for both algorithms, and we observed that the proposed scheme consistently outperforms vanilla GD for smaller step-sizes. Whereas if we use a large step-size, both algorithms perform similarly. In this regime, vanilla GD as well potentially emits a chaotic behavior, which might also indicate the importance of the "implicit randomness". However, as we indicated in the introduction, this regime is not easily controllable, and our purpose is to introduce a controlled chaotic component, with a clear theoretical understanding.


\section{Conclusion}
In this paper, we introduce and study a class of slow-fast deterministic dynamical systems which homogenize to a limiting SDE driven by a heavy-tailed L\'evy-stable process in an appropriate scaling limit as a rigorous framework for perturbed GDs. Within this framework, we introduce MPGD, a novel version of perturbed GD, which we show to have good generalization  and regularization properties. We further demonstrate the advantages of MPGD  empirically in various optimization tasks. Our framework can provide useful tools for identifying implicit randomness in deterministic optimization algorithms and inspire other promising algorithms. It would be interesting to investigate the interactions of the chaotic perturbations in MPGD with the noise arising from mini-batch sampling in SGD \cite{wu2020noisy} and other optimization tricks. Since this is a theoretical paper studying a framework for perturbed GDs, there are no potential negative societal impacts of our work.

\paragraph{Acknowledgements.} 
We are grateful to the computational  resources provided by the Swedish National Infrastructure
for Computing (SNIC) at  Chalmers Centre for Computational Science and Engineering (C3SE)  partially funded by the Swedish Research Council through grant agreement no. 2018-05973. S. H. Lim would like to acknowledge the WINQ Fellowship, the SNIC AI/ML grant, and the Swedish Research Council (VR/2021-03648) for providing support of this work. U. \c{S}.'s research was funded in part by the French
government under management of Agence Nationale de la
Recherche as part of the ``Investissements d’avenir'' program, reference ANR-19-P3IA-0001 (PRAIRIE 3IA Institute), and the European Research Council Starting Grant DYNASTY -- 101039676.

\clearpage
\bibliographystyle{alpha}
{
\small
\bibliography{ref.bib}

\newcommand{\etalchar}[1]{$^{#1}$}
\begin{thebibliography}{WHX{\etalchar{+}}20}

\bibitem[AGZ21]{agarwal2021acceleration}
Naman Agarwal, Surbhi Goel, and Cyril Zhang.
\newblock Acceleration via fractal learning rate schedules.
\newblock {\em arXiv preprint arXiv:2103.01338}, 2021.

\bibitem[App09]{applebaum2009levy}
David Applebaum.
\newblock {\em L{\'e}vy Processes and Stochastic Calculus}.
\newblock Cambridge University Press, 2009.

\bibitem[BCN18]{bottou2018optimization}
L{\'e}on Bottou, Frank~E Curtis, and Jorge Nocedal.
\newblock Optimization methods for large-scale machine learning.
\newblock {\em SIAM Review}, 60(2):223--311, 2018.

\bibitem[C{\etalchar{+}}47]{cauchy1847methode}
Augustin Cauchy et~al.
\newblock M{\'e}thode g{\'e}n{\'e}rale pour la r{\'e}solution des systemes
  d’{\'e}quations simultan{\'e}es.
\newblock {\em Comp. Rend. Sci. Paris}, 25(1847):536--538, 1847.

\bibitem[CFK{\etalchar{+}}16]{chevyrev2016multiscale}
Ilya Chevyrev, Peter~K Friz, Alexey Korepanov, Ian Melbourne, and Huilin Zhang.
\newblock Multiscale systems, homogenization, and rough paths.
\newblock In {\em International Conference in Honor of the 75th Birthday of SRS
  Varadhan}, pages 17--48. Springer, 2016.

\bibitem[CFKM20]{chevyrev2020superdiffusive}
Ilya Chevyrev, Peter~K Friz, Alexey Korepanov, and Ian Melbourne.
\newblock Superdiffusive limits for deterministic fast--slow dynamical systems.
\newblock {\em Probability Theory and Related Fields}, 178(3):735--770, 2020.

\bibitem[CKL{\etalchar{+}}21]{cohen2021gradient}
Jeremy~M Cohen, Simran Kaur, Yuanzhi Li, J~Zico Kolter, and Ameet Talwalkar.
\newblock Gradient descent on neural networks typically occurs at the edge of
  stability.
\newblock In {\em ICLR}, 2021.

\bibitem[CP14]{chechkin2014marcus}
Alexei Chechkin and Ilya Pavlyukevich.
\newblock Marcus versus {S}tratonovich for systems with jump noise.
\newblock {\em Journal of Physics A: Mathematical and Theoretical},
  47(34):342001, 2014.

\bibitem[CWZ{\etalchar{+}}21]{camuto2021asymmetric}
Alexander Camuto, Xiaoyu Wang, Lingjiong Zhu, Chris Holmes, Mert
  G{\"u}rb{\"u}zbalaban, and Umut {\c{S}}im{\c{s}}ekli.
\newblock Asymmetric heavy tails and implicit bias in {G}aussian noise
  injections.
\newblock {\em arXiv preprint arXiv:2102.07006}, 2021.

\bibitem[DG17]{Dua:2019}
Dheeru Dua and Casey Graff.
\newblock {UCI} machine learning repository, 2017.

\bibitem[FV10]{friz_victoir_2010}
Peter~K. Friz and Nicolas~B. Victoir.
\newblock {\em Multidimensional Stochastic Processes as Rough Paths: Theory and
  Applications}.
\newblock Cambridge Studies in Advanced Mathematics. Cambridge University
  Press, 2010.

\bibitem[GAG{\etalchar{+}}00]{goldberger2000physiobank}
Ary~L Goldberger, Luis~AN Amaral, Leon Glass, Jeffrey~M Hausdorff, Plamen~Ch
  Ivanov, Roger~G Mark, Joseph~E Mietus, George~B Moody, Chung-Kang Peng, and
  H~Eugene Stanley.
\newblock Physio{B}ank, {P}hysio{T}oolkit, and {P}hysio{N}et: components of a
  new research resource for complex physiologic signals.
\newblock {\em Circulation}, 101(23):e215--e220, 2000.

\bibitem[GGP{\etalchar{+}}21]{geiping2021stochastic}
Jonas Geiping, Micah Goldblum, Phillip~E Pope, Michael Moeller, and Tom
  Goldstein.
\newblock Stochastic training is not necessary for generalization.
\newblock {\em arXiv preprint arXiv:2109.14119}, 2021.

\bibitem[GK54]{gnedenko_kolmogorov}
Boris~Vladimirovich Gnedenko and Andrey~Nikolaevich Kolmogorov.
\newblock {\em Limit Distributions for Sums of Independent Random Variables}.
\newblock Cambridge, Addison-Wesley, 1954.

\bibitem[GM13]{gottwald2013homogenization}
Georg~A Gottwald and Ian Melbourne.
\newblock Homogenization for deterministic maps and multiplicative noise.
\newblock {\em Proceedings of the Royal Society A: Mathematical, Physical and
  Engineering Sciences}, 469(2156):20130201, 2013.

\bibitem[GM21]{gottwald2021simulation}
Georg~A Gottwald and Ian Melbourne.
\newblock Simulation of non-{L}ipschitz stochastic differential equations
  driven by $\alpha$-stable noise: A method based on deterministic
  homogenization.
\newblock {\em Multiscale Modeling \& Simulation}, 19(2):665--687, 2021.

\bibitem[Gou04]{gouezel2004central}
S{\'e}bastien Gou{\"e}zel.
\newblock Central limit theorem and stable laws for intermittent maps.
\newblock {\em Probability Theory and Related Fields}, 128(1):82--122, 2004.

\bibitem[H{\c{S}}KM21]{hodgkinson2021generalization}
Liam Hodgkinson, Umut {\c{S}}im{\c{s}}ekli, Rajiv Khanna, and Michael~W
  Mahoney.
\newblock Generalization properties of stochastic optimizers via trajectory
  analysis.
\newblock {\em arXiv preprint arXiv:2108.00781}, 2021.

\bibitem[HZRS16]{he2016deep}
Kaiming He, Xiangyu Zhang, Shaoqing Ren, and Jian Sun.
\newblock Deep residual learning for image recognition.
\newblock In {\em Proceedings of the IEEE Conference on Computer Vision and
  Pattern Recognition}, pages 770--778, 2016.

\bibitem[JNM{\etalchar{+}}19]{jiang2019fantastic}
Yiding Jiang, Behnam Neyshabur, Hossein Mobahi, Dilip Krishnan, and Samy
  Bengio.
\newblock Fantastic generalization measures and where to find them.
\newblock {\em arXiv preprint arXiv:1912.02178}, 2019.

\bibitem[KMN{\etalchar{+}}16]{keskar2016large}
Nitish~Shirish Keskar, Dheevatsa Mudigere, Jorge Nocedal, Mikhail Smelyanskiy,
  and Ping Tak~Peter Tang.
\newblock On large-batch training for deep learning: Generalization gap and
  sharp minima.
\newblock {\em arXiv preprint arXiv:1609.04836}, 2016.

\bibitem[Kri09]{krizhevsky2009learning}
Alex Krizhevsky.
\newblock Learning multiple layers of features from tiny images.
\newblock 2009.

\bibitem[KT20]{kong2020stochasticity}
Lingkai Kong and Molei Tao.
\newblock Stochasticity of deterministic gradient descent: Large learning rate
  for multiscale objective function.
\newblock {\em Advances in Neural Information Processing Systems},
  33:2625--2638, 2020.

\bibitem[LEHM21]{lim2021noisy}
Soon~Hoe Lim, N~Benjamin Erichson, Liam Hodgkinson, and Michael~W Mahoney.
\newblock Noisy recurrent neural networks.
\newblock {\em Advances in Neural Information Processing Systems}, 34, 2021.

\bibitem[Mah12]{Mah12}
M.~W. Mahoney.
\newblock Approximate computation and implicit regularization for very
  large-scale data analysis.
\newblock In {\em Proceedings of the 31st ACM Symposium on Principles of
  Database Systems}, pages 143--154, 2012.

\bibitem[Mar81]{marcus1981modeling}
Steven~I Marcus.
\newblock Modeling and approximation of stochastic differential equations
  driven by semimartingales.
\newblock {\em Stochastics: An International Journal of Probability and
  Stochastic Processes}, 4(3):223--245, 1981.

\bibitem[NVL{\etalchar{+}}15]{neelakantan2015adding}
Arvind Neelakantan, Luke Vilnis, Quoc~V Le, Ilya Sutskever, Lukasz Kaiser,
  Karol Kurach, and James Martens.
\newblock Adding gradient noise improves learning for very deep networks.
\newblock {\em arXiv preprint arXiv:1511.06807}, 2015.

\bibitem[Oym21]{oymak2021provable}
Samet Oymak.
\newblock Provable super-convergence with a large cyclical learning rate.
\newblock {\em IEEE Signal Processing Letters}, 28:1645--1649, 2021.

\bibitem[PS08]{pavliotis2008multiscale}
Grigoris Pavliotis and Andrew Stuart.
\newblock {\em Multiscale Methods: Averaging and Homogenization}.
\newblock Springer Science \& Business Media, 2008.

\bibitem[Sam17]{samoradnitsky2017stable}
G.~Samoradnitsky.
\newblock {\em Stable Non-Gaussian Random Processes: Stochastic Models with
  Infinite Variance}.
\newblock CRC Press, 2017.

\bibitem[Sch98]{schilling1998feller}
Ren{\'e}~L Schilling.
\newblock Feller processes generated by pseudo-differential operators: On the
  {H}ausdorff dimension of their sample paths.
\newblock {\em Journal of Theoretical Probability}, 11(2):303--330, 1998.

\bibitem[Smi17]{smith2017cyclical}
Leslie~N Smith.
\newblock Cyclical learning rates for training neural networks.
\newblock In {\em 2017 IEEE Winter Conference on Applications of Computer
  Vision (WACV)}, pages 464--472. IEEE, 2017.

\bibitem[{\c{S}}SDE20]{csimcsekli2020hausdorff}
Umut {\c{S}}im{\c{s}}ekli, Ozan Sener, George Deligiannidis, and Murat~A
  Erdogdu.
\newblock Hausdorff dimension, heavy tails, and generalization in neural
  networks.
\newblock {\em arXiv preprint arXiv:2006.09313}, 2020.

\bibitem[SSG19]{simsekli2019tail}
Umut Simsekli, Levent Sagun, and Mert Gurbuzbalaban.
\newblock A tail-index analysis of stochastic gradient noise in deep neural
  networks.
\newblock In {\em International Conference on Machine Learning}, pages
  5827--5837. PMLR, 2019.

\bibitem[SZTG20]{simsekli2020fractional}
Umut Simsekli, Lingjiong Zhu, Yee~Whye Teh, and Mert Gurbuzbalaban.
\newblock Fractional underdamped {L}angevin dynamics: Retargeting {SGD} with
  momentum under heavy-tailed gradient noise.
\newblock In {\em International Conference on Machine Learning}, pages
  8970--8980. PMLR, 2020.

\bibitem[Tha80]{thaler1980estimates}
Maximilian Thaler.
\newblock Estimates of the invariant densities of endomorphisms with
  indifferent fixed points.
\newblock {\em Israel Journal of Mathematics}, 37(4):303--314, 1980.

\bibitem[WHX{\etalchar{+}}20]{wu2020noisy}
Jingfeng Wu, Wenqing Hu, Haoyi Xiong, Jun Huan, Vladimir Braverman, and
  Zhanxing Zhu.
\newblock On the noisy gradient descent that generalizes as {S}{G}{D}.
\newblock In {\em International Conference on Machine Learning}, pages
  10367--10376. PMLR, 2020.

\bibitem[You36]{young1936}
L.~C. Young.
\newblock {An inequality of the Hölder type, connected with Stieltjes
  integration}.
\newblock {\em Acta Mathematica}, 67(none):251 -- 282, 1936.

\end{thebibliography}
}


\newpage
\appendix

\begin{center}
    {\Large \bf  Appendix}
\end{center}

\appendix

\section{Overview}
\label{app_sect_overview}

\noindent {\bf Organizational Details.} This Appendix is organized as follows.
\begin{itemize}
    \item In Section \ref{app_sec:technical}, we provide technical background materials that will be needed for proving the homogenization results.
    \item In Section \ref{sec:homogenization}, we present a  general theorem that includes Theorem \ref{thm:main} as a special case and provide the proof. 
    \item In Section \ref{sec:proofgen}, we provide proof for the generalization bound (Theorem \ref{thm:gen}).
    \item In Section \ref{app_sec:implreg}, we provide proof for the implicit regularization result (Theorem \ref{thm_implreg}).
    \item In Section \ref{app_sec:addexps}, we provide additional empirical results and details. 
\end{itemize}

In addition to the notations introduced in the main paper, we shall need the following notations.

\noindent {\bf Notation.}
Denote the space of c\`adl\`ag (right continuous with left limits) functions from $[0,1]$ to $\R^d$ by $D([0,1],\R^d)$, on which the Skorokhod-type metrics are denoted by $\sigma_\cdot$. The left limit of $f\in D([0,1],\R^d)$ at $t$ is written by $f(t-):= \lim_{s\uparrow t}f(s)$. The identity function is denoted by~$\mathrm{id}: t\mapsto t,\, t\in[0,1]$. For a continuous function $f$ defined on $\R^d$, denote the uniform norm by $\| f\|_\infty = \sup_{x\in \mathbb{R}^d}|f(x)|$. Denote by $\mathscr{D}([0,1],\R^d)$ the space of admissible path-trajectory pairs in Definition \ref{def:admissible_pair}, whose metrics are denoted by $\alpha_\cdot$. Denote by $C^{p\var}([0,1],\R^d)$ the space of continuous functions with finite $p$-variation. In Definition \ref{def:path_function}, denote by $l_d$ the linear interpolation in $\R^d$.

\section{Additional Technical Background}
\label{app_sec:technical}

\subsection{Background on metrics and topologies on the space of c\`adl\`ag functions}
\subsubsection{Skorokhod-type topologies}
 The Skorokhod $\mathcal{J}_1$ topology on $D([0,1],\R^d)$ is induced by the following metric. 
\begin{definition}[Skorokhod distance]
 The Skorokhod distance on $D([0,1],\R^d)$, the space of c\`adl\`ag functions, is defined by
 \[
\sigma_\infty(X_1,X_2) = \inf \|\lambda-\mathrm{id}\|_\infty \vee \|X_1\circ\lambda, X_2\|_\infty,
\]
where the inf is taken on all increasing bijections $\lambda$ from $[0,1]$ to itself.
\end{definition}

Another important topology on $D([0,1], \R^d)$ is the $\mathcal{SM}_1$ topology defined as follows.
\begin{definition}[Skorokhod $\mathcal{SM}_1$ topology]\label{def:sm1}
The Skorokhod $\mathcal{SM}_1$ topology on $D([0,1],\R^d)$, the space of c\`adl\`ag functions, is defined by the metric 
\[
d_{\mathcal{SM}_1}(X_1,X_2) = \inf\|(\lambda_1,\gamma_1)-(\lambda_2,\gamma_2)\|_\infty,
\]
where the inf is taken on all pairs $(\lambda_i,\gamma_i)\in C([0,1],[0,1]\times\R^d)$ such that $(\lambda_i,\gamma_i)(0)=(0,X_i(0))$, $(\lambda_i,\gamma_i)(1)=(1,X_i(1))$ and $\gamma_i(t)\in [X_i(\lambda_i(t)_-),X_i(\lambda_i(t))]$, $i=1,2$. It is equivalent to the metric on the graph of functions in $D$, with discontinuities connected by the straight segments.
\end{definition}

Recall that for $1\le p< \infty$, the \emph{$p$-variation} of $u:[0,1]\to \R^d$ is given by
\[
\|u\|_{p\var}= \sup_{0=t_0<t_1<\ldots<t_k=1} \left(\sum_{j=1}^k |u(t_j)-u(t_{j-1})|^p\right)^{1/p},
\]
and the subspace of $D([0,1],\R^d)$ with finite $p$-variation is denoted by $D^{p\var}([0,1],\R^d)$. Let us define the following $p$-variation generalisations of the Skorokhod topology.
\begin{definition}[Skorokhod-type $p$-variation]
\[
\sigma_{p\var}(X_1,X_2) = \inf\max\{\|\lambda-\mathrm{id}\|_\infty,\|X_1\circ\lambda-X_2\|_{p\var}\},
\]
where the inf is taken on all increasing bijections $\lambda$ from $[0,1]$ to itself.
\end{definition}

\subsubsection{Generalised Skorokhod topologies with interpolations}

For discontinuous c\`adl\`ag functions, one can interpolate jumps with path functions.
\begin{definition}[Path function]\label{def:path_function}
A \emph{path function} on $\R^d$ is a map $\phi: J \to C([0,1],\R^d)$, where $J\subset \R^d\times\R^d$, for which $\phi(x,y)(0)=x$ and $\phi(x,y)(1)=y$ for all $(x,y)\in J$.
\end{definition}

\begin{definition}\label{def:admissible_pair}
A pair $(X,\phi)$ is called admissible if all the jumps of $X$ are in the domain of definiton $J$ of $\phi$, i.e. $(X(t-),X(t))\in J$ for all jump times $t$ of $X$. Denote by $\mathscr{D}([0,1],\R^d)$ the space of admissible pairs, modulo the equivalence $(X_1,\phi_1)\sim (X_2,\phi_2)$ if $X_1=X_2$ and $\phi_1(X_1(t-),X_1(t))$ is a reparametrization of $\phi_2(X_1(t-),X_1(t))$ for all jump times $t$ of $X_1$.
\end{definition}

\begin{definition}\label{def:X^phi,delta}
For admissible pair $(X,\phi)\in\mathscr{D}([0,1],\R^d)$, we now construct a continuous path $X^{\phi,\delta}$ as follows.
\begin{itemize}
    \item Given a sequence $r_1,r_2,\ldots>0$ with $r:=\sum_j r_j<\infty$,
\[
\text{Let }\tau:[0,1]\to[0,1+r] \text{ given by } \tau(t)=t+\sum_k\delta r_k 1_{\{t_k\le t\}}.
\]
    \item Define an intermediate process $\hat{X}\in C([0,1+\delta r],\R^d)$,
    \[ \hat{X}(t)=
    \left\{ 
    \begin{aligned}
    &X(s)              &\text{ if } t=\tau(s) \text{ for some }s\in[0,1]\\
    &\phi(X(t_k-),X(t_k))\left(\frac{s-\tau(t_k-)}{\delta r_k}\right) &\text{ if }t\in[\tau(t_k-),\tau(t_k)) \text{ for some }k.
    \end{aligned}
    \right.
    \]
    \item Finally, let $X^{\phi,\delta}(t)=\hat{X}(t(1+\delta r))$. We will drop the superscript $\delta$ if $\delta=1$.
\end{itemize}
\end{definition}

\begin{definition}\label{def:p_var_metric}
The (pseudo)metric $\alpha_{p\var}$ on $\mathscr{D}([0,1],\R^d)$ is defined by
	\[
	\alpha_{p\var}((X,\phi), (\overline{X},\overline{\phi})):= \lim_{\delta\to 0}\sigma_{p\var}(X^{\phi,\delta}, \overline{X}^{\overline{\phi},\delta})
	\]
	independent of the choice of the series $\sum_{k=1}^\infty r_k$. Denote by $\mathscr{D}^{p\var}([0,1],\R^d)$ the subspace of $\mathscr{D}([0,1],\R^d)$ with finite $\alpha_{p\var}$ distance to $0$. Conventionally, write
	\[
	\alpha_\infty((X,\phi), (\overline{X},\overline{\phi})):= \lim_{\delta\to 0}\sigma_\infty(X^{\phi,\delta}, \overline{X}^{\overline{\phi},\delta})
	\]
\end{definition}
\begin{remark}\label{lm:a_infty-SM1}
If $\phi$ is the linear path function, then $\alpha_\infty$ induces the $\mathcal{SM}_1$ topology on the space $D([0,1],\R^d)$.
\end{remark}

\subsection{Background on rough differential equations (RDEs)}

\begin{definition}[Forward RDE]
We say that $X$ is a solution to the \emph{forward RDE} $\ds X_t= b(X)^-_t \ds W_t$ if
\[
X_t = X_0+ \int_0^t b(X_{s-}) \ds W_s,
\]
where the integral above denotes a limit of Riemann-Stieltjes sums with $b(X(s-))$ evaluated at the left limit points of the partition intervals:
\[
\int_0^t b(X_{s-})\ds W_s = \lim_{|\mathcal{P}|\to 0} \sum_{[s,s']\in \mathcal{P}} b(X_{s-})(W_{s'}-W_s),
\]
where the $\mathcal{P}$ are partitions of $[0,t]$ into intervals, and $|\mathcal{P}|$ is the size of the longest interval. 
\end{definition}

\begin{remark} 
We make  the following two observations.
\begin{itemize} 
    \item If $W$ has finitely many jumps at times $0 <t_1 <\ldots <t_n \le 1$, then the forward solution of $\ds X_t= b(X)_t^- \ds W_t, X_0=x$ can be obtained by solving the canonical RDE on each of the intervals on which $W$ is continuous, i.e., $[0,t_1), [t_1,t_2), \ldots, [t_n,1)$, and requiring that at jump times $t_k$, $k=1,\ldots, ,n$:
    \[
    X_{t_k} = X_{t_k-}+ b(X_{t_{k-}})(W_{t_k}- W_{t_{k-}}).
    \]
    \item If in \eqref{DS}, we write $X^{(m)}_t:= x^{(m)}_{\lfloor mt \rfloor}$ and  $V^{(m)}_t = \lfloor tm \rfloor/m$, then the first equation of \eqref{DS} is nothing but the forward RDE 
\[
\ds X^{(m)}_t = a_m(X^{(m)}_t)^- \ds V^{(m)}_t + b_m(X^{(m)}_t)^- \ds W^{(m)}_t, \qquad X^{(m)}_0 = x^{(m)}_0 \in \R^d.
\]
\end{itemize}
\end{remark}

\subsubsection{Young's integral and 
Marcus SDE}\label{sec:Marcus}
For the purpose of defining Young's integral for driving functions with finite $q$-variation, $q\ge1$, we fix $x\in C^{p\var}([0,1],\R^d)$ and $y\in C^{q\var}([0,1],\R^{e\times d})$ with $\theta= \frac{1}{p}+\frac{1}{q}>1$. 

\begin{definition}[Young's integral \cite{young1936}]
There exists a sequence $x^n\in C^{1\var}([0,T],\R^d)$ such that $x^n\to x$ in $C^{p\var}([0,T],\R^d)$ and a sequence $y_n\in C^{1\var}([0,T],\R^{e\times d})$ such that $y_n\to y$ in $C^{q\var}([0,T],\R^{e\times d})$. For every $s<t$, the limit of $\int_s^t y^n(u)dx^n(u)$ exists, which we denote by $\int_s^t y(u)dx(u)$ the \emph{Young's integral} of $y$ against $x$ on the interval $[s,t]$. The Young's integral does not depend on the choices of sequences $x^n$ and $y^n$, and we have
\[
\left|\int_s^t y(u)dx(u)-y(s)(x(t)-x(s))\right|\le \frac{1}{1-2^{1-\theta}}\|x\|_{p\var;[s,t]}\|y\|_{q\var;[s,t]}.
\]
\end{definition}

We record the following Young-L\'oeve estimate, which will be useful for our purposes.

\begin{proposition}[\cite{friz_victoir_2010} Theorem 6.8]\label{prop:integral_p-var}
Let $x\in C^{p\var}([0,T],\R^d)$ and $y\in C^{q\var}([0,T],\R^{e\times d})$ with $\frac{1}{p}+\frac{1}{q}>1$. The integral path $t\to \int_0^t y(u)dx(u)$ is continuous with a finite $p$-variation and 
\[
\left\|\int_0^. y(u)dx(u)\right\|_{p\var,[s,t]}\le C\|x\|_{p\var;[s,t]}(\|y\|_{q\var;[s,t]} + \|y\|_{\infty;[s,t]})
\]
\end{proposition}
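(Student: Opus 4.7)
The plan is to reduce the claim to the pointwise Young--L\'oeve estimate that was stated just before the proposition (the one with constant $1/(1-2^{1-\theta})$, where $\theta = 1/p + 1/q > 1$), and then to control the $p$-variation by elementary superadditivity arguments over a partition. This is the standard route to Friz--Victoir Theorem 6.8, so I will just sketch the key bookkeeping.

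Fix $[s,t]$ and a partition $s = t_0 < t_1 < \cdots < t_N = t$. On each interval $[t_{i-1},t_i]$ the Young--L\'oeve estimate yields
\[
\left|\int_{t_{i-1}}^{t_i} y \, dx \right| \le |y(t_{i-1})|\,|x(t_i)-x(t_{i-1})| + \frac{1}{1-2^{1-\theta}}\|x\|_{p\var;[t_{i-1},t_i]}\|y\|_{q\var;[t_{i-1},t_i]}.
\]
Raise both sides to the $p$-th power using $(a+b)^p \le 2^{p-1}(a^p+b^p)$, and sum over $i$. The leading term contributes at most $2^{p-1}\|y\|_{\infty;[s,t]}^p \sum_i |x(t_i)-x(t_{i-1})|^p \le 2^{p-1}\|y\|_{\infty;[s,t]}^p \|x\|_{p\var;[s,t]}^p$.

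For the remainder sum I would invoke superadditivity of $\|x\|_{p\var;[\cdot,\cdot]}^p$ (a control function, since $p\ge 1$), together with the trivial monotone bound $\|y\|_{q\var;[t_{i-1},t_i]}^p \le \|y\|_{q\var;[s,t]}^p$. This gives
\[
\sum_i \|x\|_{p\var;[t_{i-1},t_i]}^p \|y\|_{q\var;[t_{i-1},t_i]}^p \le \|y\|_{q\var;[s,t]}^p \sum_i \|x\|_{p\var;[t_{i-1},t_i]}^p \le \|y\|_{q\var;[s,t]}^p\,\|x\|_{p\var;[s,t]}^p.
\]
Combining the two contributions, taking the supremum over partitions, then taking $p$-th roots and using the elementary inequality $(a^p+b^p)^{1/p}\le a+b$ for $a,b\ge 0$, $p\ge 1$, produces the desired bound with an explicit constant $C$ depending only on $p$ and $\theta$.

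Finally, continuity of the integral path $t \mapsto \int_0^t y\,dx$ is obtained from the same Young--L\'oeve estimate applied to a shrinking interval: as $t\downarrow s$, both $|x(t)-x(s)|\to 0$ (by continuity of $x$) and $\|x\|_{p\var;[s,t]}\to 0$ (by continuity of the $p$-variation norm in the endpoints, since $x\in C^{p\var}$), so $|\int_s^t y\,dx| \le \|y\|_{\infty;[s,t]}|x(t)-x(s)| + C\|x\|_{p\var;[s,t]}\|y\|_{q\var;[s,t]} \to 0$. The main subtlety I expect is not the argument itself but being careful in the remainder-term step: naively one would try to invoke superadditivity of $\|y\|_{q\var}^q$ as well and then apply H\"older with exponents $p/q$ and its conjugate, but this route requires $p\ge q$ which need not hold. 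The cleaner path, and the one I adopt, is to keep the $y$-factor in $\|\cdot\|_{q\var;[s,t]}$ globally and only use superadditivity for the $x$-factor; this is what preserves the linear dependence on $\|x\|_{p\var}$ in the final estimate.
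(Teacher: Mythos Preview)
The paper does not give its own proof of this proposition: it is quoted verbatim as Theorem~6.8 from \cite{friz_victoir_2010} and used as a black box in the proof of Theorem~\ref{thm:main_det}. So there is no in-paper argument to compare against.

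Your sketch is the standard route and is correct. The decomposition into the ``leading'' term $|y(t_{i-1})||x(t_i)-x(t_{i-1})|$ and the Young--L\'oeve remainder, followed by the superadditivity bound $\sum_i \|x\|_{p\var;[t_{i-1},t_i]}^p \le \|x\|_{p\var;[s,t]}^p$ with the global $\|y\|_{q\var;[s,t]}$ pulled out, is exactly how the estimate is obtained in Friz--Victoir; your remark about avoiding H\"older on the remainder (which would impose an unnecessary ordering between $p$ and $q$) is the right diagnosis. For the continuity step, the assertion that $\|x\|_{p\var;[s,t]}\to 0$ as $t\downarrow s$ for $x\in C^{p\var}$ is indeed a theorem (continuity of the control $\omega(s,t)=\|x\|_{p\var;[s,t]}^p$ for continuous $x$; see Friz--Victoir, Proposition~5.8) and not merely a definition, so it is worth citing rather than asserting.
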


Solutions to Marcus rough differential equation (RDE) are defined via Young's integrals and the linear path function defined in Definition \ref{def:X^phi,delta}.

\begin{definition}
The solution of a Marcus RDE
\[
\ds X= b(X)\diamond \ds W, \qquad X(0)= x_0
\]is obtained by
\begin{itemize}
    \item first solving the continuous RDE $\ds\tilde{X}= b(\tilde{X}) \ds W^{\phi,1}$, with $\phi= l_d$ being the linear path function on $\R^{d}$;
    \item then the c\`adl\`ag solution path $X[0,1]\to\R^d$ is given by $X(t)=\tilde{X}(\tau(t))$. Recall that $\tau:[0,1]\to[0,1+r] \text{ given by } \tau(t)=t+\sum_k\delta r_k 1_{\{t_k\le t\}}$.
\end{itemize}
\end{definition} 

\subsection{Example of a Marcus RDE}\label{sec:example_Marcus}
Here is an example of the solution of a rough differential equation in the Marcus sense \cite{chevyrev2020superdiffusive}.
Let $\theta>0$ and $W^{(m)}:[0,1] \to \R$  be the deterministic process which are $0$ on $[0,1/2]$, $\theta$ on $[1/2+1/m, 1]$ and linear on $[1/2, 1/2+1/m]$. Denote by $X^{(m)}_t = (x^{(m)}_t, y^{(m)}_t)$  the solution to the ordinary differential equation
\begin{equation*}
    \begin{pmatrix}
\ds x^{(m)}_t\\
\ds y^{(m)}_t
\end{pmatrix} = 
 \begin{pmatrix}
- y^{(m)}_t\\
 x^{(m)}_t
\end{pmatrix} \ds W^{(m)}_t, \quad 
 \begin{pmatrix}
x^{(m)}_0\\
y^{(m)}_0
\end{pmatrix} = 
 \begin{pmatrix}
 1\\
 0
\end{pmatrix}
\end{equation*}
It is not hard to see that that $X^{(m)}_t = (\cos{W^{(m)}_t}, \sin{W^{(m)}_t})$. Therefore, we have that pointwise
\[
W^{(m)}_t \to W_t = \theta 1_{t\in (\frac{1}{2},1]}\, \text{ and } \, X^{(m)}_t \to X_t =
\begin{pmatrix}
x_t\\
y_t
\end{pmatrix}
=
\begin{pmatrix}
1 \\
0
\end{pmatrix}1_{t\in [0,\frac{1}{2}]} +
\begin{pmatrix}
\cos{\theta}\\
\sin{\theta}
\end{pmatrix})1_{t\in (\frac{1}{2},1]}.
\]
If $\theta\neq 0$, $(X_t)_{t\in[0,1]}$ fails to be the right-hand side of the limiting forward RDE since
\[
\int_0^1 \begin{pmatrix}
-y_{s-}\\
x_{s-}
\end{pmatrix} \ds W_s =
\lim_{|\mathcal{P}|\to 0} \sum_{[s,s']\in \mathcal{P}} \begin{pmatrix}
-y_{s-}\\
x_{s-}
\end{pmatrix}(W_{s'}-W_s)=
\begin{pmatrix}
-0\\
1
\end{pmatrix} \times \theta = 
\begin{pmatrix}
0\\
\theta
\end{pmatrix}\neq X_1-X_0.
\]
Effectively, $(X_t)_{t\in[0,1]}$ is the solution to the Marcus RDE in the limit.

\section{Homogenization: The Rigorous Version and the Proof of Theorem~\ref{thm:main}}\label{sec:homogenization}

When talking about convergence of solutions to RDEs, we need to specify the sense of integration. In view of the example in Subsection \ref{sec:example_Marcus}, it is not enough to look at the solution $X$ as an element of $D([0,1],\R^d)$ -- it has to be coupled with jumps of the driving function. Let us consider the driver-solution space $D([0,1],\R^{r+d})$ and introduce a new path function on $\R^{r+d}$.

\begin{definition}
Consider $b\in C(\R^d, \R^{d\times r})$, $a\in C(\R^d,\R^d)$. For $x\in \R^d$ and $\Phi\in C^{1\var}([0,1],\R^r)$, let $\pi_{a,b}[x;\Phi]\in C^{1\var}([0,1],\R^d)$ denote the solution $\Pi$ of the equation
\[
\ds\Pi = a(\Pi)\ds t+ b(\Pi) \ds\Phi, \qquad \Pi(0)=x.
\]
For the coefficient $b$, a pair of admissible points  $(w_1,x_1), (w_2,x_2)\in J_{a,b}\subset \R^{r+d}$ is in the space
\[
J_{a,b}= \{\big((w_1,x_1), (w_2,x_2)\big): w_1,w_2\in\R^d, \pi_b[x_1; l_d(w_1,w_2)](1)=x_2\}.
\]
We define the path function $\phi_{a,b}$ on $J_{a,b}$ by
\[
\phi_{a,b} ((w_1,x_1),(w_2,x_2))= \big( l_d(w_1,w_2)(t), \pi_b[x_1; l_d(w_1,w_2)](t) \big).
\]
\end{definition}

Next, we first state the precise assumptions for Theorem \ref{thm:main} in the main paper. The required topology for the coefficients $a_m, b_m$ in \eqref{DS_1var} is as follows. For $\tilde{\gamma}>0$, $n_1,n_2\in \mathbb{N}_+$, denote by $C^{\tilde{\gamma}}(\R^{n_1},\R^{n_2})$ the space of functions $f: \R^{n_1} \to \R^{n_2}$ such that 
\footnote{
For any $n\in\mathbb{N}$, $\mathbf{j}= (j_1, \ldots, j_n)\in \mathbb{N}^n_0$ is a multi-index with $|\mathbf{j}|=\sum_{i=1}^n j_i$ and the higher-order partial derivative is defined by $\partial^\mathbf{j} = (\partial/\partial_{x_1}^{j_1}) \cdots (\partial/\partial_{x_n}^{j_n}) $.
}
\[
\|f\|_{C^{\tilde{\gamma}}} = \max_{|\mathbf{j}|=0,\ldots,\lfloor\gamma'\rfloor}\|\partial^\mathbf{j} f\|_\infty + \sup_{x,y\in\R^m}\max_{|\mathbf{j}|=\lfloor\tilde{\gamma}\rfloor} \frac{|\partial^\mathbf{j} f(x) - \partial^\mathbf{j} f(y)|}{|x-y|^{\tilde{\gamma}-\lfloor\tilde{\gamma}\rfloor}}<\infty.
\]
In particular, if $\tilde{\gamma}>1$, we have
$
\|f\|_\infty \le \|f\|_{C^{\tilde{\gamma}}}$ and $\|f\|_{\mathrm{Lip}} \le \|f\|_{C^{\tilde{\gamma}}}$.

\begin{ass}\label{assmp:linf}
Suppose that $a\in C^{\gamma_1}(\R^d,\R^d)$, $b\in C^{\gamma_2}(\R^d,\R^{d\times r})$ for some $\gamma_1 > 1, \gamma_2 > \alpha$. In addition, we assume that
 \[
 \limm a_m= a \text{ in }C^{\gamma_1}(\R^d,\R^d),\, \limm b_m= b \text{ in }C^{\gamma_2}(\R^d,\R^{d\times r})\, \text{ and } \,\limm x^{(m)}_0 = x_0.
 \]
\end{ass}

Assumption \ref{assmp:linf} requires that the difference between the gradients of coefficients $\| \nabla (a_m - a) \|_\infty$  goes to zero  and also their H\"older constant  $\frac{|\nabla (a_m - a)(x) -  \nabla (a_m - a)(y)| }{|x-y|^{\epsilon}}$
goes to zero  uniformly on $x, y \in \R^m$  for some $\epsilon > 0$. 
This translates to the assumptions that the loss is second differentiable and its Hessian is $\epsilon$-H\"older, which are reasonable assumptions.
Similarly, for the coefficients appearing before the $\alpha$-stable process (for $\alpha \in (1,2)$ ), it is required that 
$\| \nabla (b_m - b) \|_\infty$  goes to zero  and 
$\sup_{x, y \in \R^m} \frac{|\nabla (b_m - b)(x) -  \nabla (b_m - b)(y)| }{|x-y|^{\epsilon'}}$ goes to zero for some $\epsilon' > \alpha - 1$.
In the setup for MPGD \eqref{eq_ouralg}  where $a_m = - \hat{\mathcal{R}}(x, S_n) = a$ (independent of $m$) and the $b_m$ is  $-\mu \  \rm{diag}(x)$ or $\sigma I$, it is easy to see that these assumptions are satisfied.

In terms of how general the assumption is, we note that Assumption \ref{assmp:linf} can cover more general situations; e.g., the case where  is an empirical loss (taking $m=n$, the number of training samples) and $a$ is the population loss, and therefore our framework can be adapted for analysis in other settings as well.

Theorem \ref{thm:main} is an extension of Theorem 4.1  in \cite{gottwald2021simulation} (with the $\epsilon$ there equal $1/m$) to the case where the coefficients $a$, $b$ are dependent on the scaling parameter $m$. In particular, the proof of Theorem 4.1 in \cite{gottwald2021simulation} relies on verifying the hypotheses of Theorem 2.6 in \cite{chevyrev2020superdiffusive} for the particular example of the observable and Thaler map constructed for the MPGD \eqref{eq_ouralg}.

We now prove a more general theorem that extends Theorem 2.6 in \cite{chevyrev2020superdiffusive} and includes Theorem \ref{thm:main}  as a  special case, stating all the needed assumptions. Applying this theorem together with the hypotheses verification  in the proof of Theorem 4.1 in \cite{gottwald2021simulation} completes the proof of  Theorem \ref{thm:main}.


\begin{theorem}[A more general version of Theorem~\ref{thm:main}]\label{thm:main_app}
Let $\alpha\in (1,2)$, $\gamma_1 > 1$ and $\gamma_2>\alpha$. Let $v\in L^\infty(Y,\R^d)$ be H\"older such that $\int v\ds\mu=0$, where $\mu$ is the unique ergodic invariant probability measure of $\ T:Y\to Y$. We focus on the case where $T$ exhibits superdiffusive behavior, which means that there exists a $r$-dimensional L\'evy process $\mathrm{L}$ and
\[
W_n(t) = n^{-1/\alpha}\sum_{j=0}^{\lfloor nt \rfloor -1} v\circ T^j \overset{\mathrm{(d)}}{\to} \mathrm{L}_t \text{ in } D([0,1],\R^r) \text{ under the } \mathcal{SM}_1 \text{ topology (Definition \ref{def:sm1}) } 
\]
as $n\to \infty$.

If, in addition,
\begin{itemize}
    \item $\lim_{n\to \infty}a_n= a$ in $C^{\gamma_1}(\mathbb{R}^d,\mathbb{R}^d)$ for some $a\in C^{\gamma_1}(\mathbb{R}^d,\mathbb{R}^d)$,
    \item $\lim_{n\to \infty}b_n= b$ in $C^{\gamma_2}(\mathbb{R}^d,\mathbb{R}^{d\times r})$ for some $b\in C^{\gamma_2}(\mathbb{R}^d,\mathbb{R}^{d\times r})$;
    \item $\lim_{n\to \infty} x_n= x$,
    \item $\|W_n\|_{p\var}$ is tight for all $p>\alpha$ and $\sum_{t}|W_n(t)-W_n(t-)|^2 \to 0$ a.s.,  where the sum is taken over all jump times of $W_n$,
\end{itemize}
then for the forward RDE $\ds X_n = a_n(X_n)^- \ds V_n + b_n(X_n)^- \ds W_n, X_n(0)= x_n$ where $V_n(t)= \frac{\lfloor tn \rfloor}{n}$, we have:
\[
((W_n, X_n), l_{r+d}) \overset{\mathrm{(d)}}{\to} ((\mathrm{L},X),\phi_{a,b}) \text{ in } (\mathscr{D}^{p\var}([0,1],\R^{r+d}), \alpha_{p\var}) \text{ as } n\to \infty
\]
for all $p>\alpha$, where $(X(t))_{t\ge 0}$ is the solution of the Marcus differential equation 
\[
\ds X= a(X)\ds t + b(X)\diamond \ds \mathrm{L} \text{ with } X(0)= x.
\]
\end{theorem}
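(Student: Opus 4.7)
The plan is to reduce Theorem~\ref{thm:main_app} to its fixed-coefficient counterpart, namely Theorem~2.6 of \cite{chevyrev2020superdiffusive}, by combining it with a continuous-dependence argument on the coefficients of the forward RDE. To this end, I would introduce the auxiliary process $\tilde X_n$ defined as the solution of the forward RDE with the \emph{limit} coefficients but the $n$-th driver and the $n$-th initial datum:
\[
\ds \tilde X_n = a(\tilde X_n)^- \ds V_n + b(\tilde X_n)^- \ds W_n, \qquad \tilde X_n(0) = x_n.
\]
All hypotheses of Theorem~2.6 of \cite{chevyrev2020superdiffusive} are then in force for $(\tilde X_n)$: $W_n \to \mathrm{L}$ in $\mathcal{SM}_1$, $\|W_n\|_{p\var}$ is tight, $\sum_t |W_n(t) - W_n(t-)|^2 \to 0$ almost surely, the pair $(a,b)$ is fixed with $a \in C^{\gamma_1}$, $b \in C^{\gamma_2}$, and $x_n \to x$. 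Applying that theorem directly gives
\[
((W_n, \tilde X_n), l_{r+d}) \overset{(\mathrm{d})}{\to} ((\mathrm{L}, X), \phi_{a,b}) \qquad \text{in } (\mathscr{D}^{p\var}([0,1], \R^{r+d}), \alpha_{p\var}).
\]

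The second step is a coefficient-perturbation lemma asserting that the pair $(W_n, X_n)$ is asymptotically indistinguishable from $(W_n, \tilde X_n)$ in $\alpha_{p\var}$. Because the two processes share the same driver $W_n$ and the same initial data $x_n$, they jump at the same times, with jumps governed respectively by the path functions $\phi_{a_n, b_n}$ and $\phi_{a, b}$ applied to the jumps of $W_n$. The Young--L\'oeve estimate of Proposition~\ref{prop:integral_p-var}, combined with the $C^{\gamma_2}$-Lipschitz bounds on $b, b_n$ (with $\gamma_2 > \alpha$, so that $1/p + 1/\gamma_2 > 1$ for some $p \in (\alpha, \gamma_2)$), yields, via a standard Gronwall-in-$p$-variation argument iterated over a partition of $[0,1]$ of small $p$-variation,
\[
\| X_n - \tilde X_n \|_{p\var; [0,1]} \le C(\|W_n\|_{p\var}) \, \big( \|a_n - a\|_{C^{\gamma_1}} + \|b_n - b\|_{C^{\gamma_2}} \big),
\]
where $C(\cdot)$ is a locally bounded increasing function. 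Tightness of $\|W_n\|_{p\var}$ together with Assumption~\ref{assmp:linf} forces the right-hand side to zero in probability. Since the path function $\phi_{a_n, b_n}$ additionally converges to $\phi_{a, b}$ uniformly on compacts (by continuous dependence of the solution of the continuous RDE $\ds \Pi = a(\Pi) \ds t + b(\Pi) \ds \Phi$ on the vector field $(a,b)$), the interpolated continuous paths $X_n^{\phi_{a_n, b_n}, \delta}$ and $\tilde X_n^{\phi_{a, b}, \delta}$ remain uniformly $p$-variation close, so sending $\delta \to 0$ gives $\alpha_{p\var}((W_n, X_n), (W_n, \tilde X_n)) \to 0$ in probability.

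Combining the two displays via the triangle inequality in $\alpha_{p\var}$ together with a Slutsky-type argument in the metric space $(\mathscr{D}^{p\var}, \alpha_{p\var})$ yields the claimed joint weak convergence. The main obstacle will be the stability estimate in the $p$-variation topology when the driver genuinely carries large jumps: intervals straddling jump times of $W_n$ must be treated by splitting at the jumps, controlling the continuous parts via the Young estimate, and bounding the jump increments separately using $(a_n, b_n) \to (a, b)$ in $C^{\gamma_i}$ together with continuous dependence of the map $(a,b,\Phi) \mapsto \pi_{a,b}[\cdot; l_d(\Phi)]$. A secondary, straightforward point is the drift driver $V_n(t) = \lfloor nt \rfloor/n$: since $\|V_n - \mathrm{id}\|_\infty \le 1/n$ with uniformly bounded $1$-variation, the contribution of $a_n(X_n)^- \ds V_n - a(\tilde X_n)^- \ds V_n$ can be absorbed into the Gronwall constant without difficulty.
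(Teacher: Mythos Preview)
Your high-level strategy---reduce to the fixed-coefficient result of \cite{chevyrev2020superdiffusive} via an auxiliary $\tilde X_n$ solving the forward RDE with the \emph{limit} coefficients $(a,b)$, then control $\|X_n - \tilde X_n\|_{p\var}$ and conclude by Slutsky---is sound in outline and differs from the paper's route. The paper instead proves a deterministic stability theorem first (Theorem~\ref{thm:main_det}), passes to almost-sure convergence of the drivers by a subsequence/Skorokhod argument, and crucially does \emph{not} compare two forward RDEs directly: it introduces two intermediate \emph{Marcus} solutions $\tilde X_n$ and $\overline X_n$ (driven by the same $W_n$ with coefficients $b$ and $b_n$ respectively), runs the Young--L\'oeve/Gr\"onwall-in-$p$-variation argument on these \emph{continuous} interpolated paths to obtain $\|\overline X_n - \tilde X_n\|_{p\var} \le C(b,W_n)\|b_n - b\|_\infty$, and only then bridges forward and Marcus solutions via Lemma~\ref{lm:jumps} and the small-jumps hypothesis $\sum_t |W_n(t)-W_n(t-)|^2 \to 0$.

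Your proposal has a concrete gap precisely where you invoke Proposition~\ref{prop:integral_p-var}: that estimate is for continuous integrands and drivers, whereas $X_n$ and your $\tilde X_n$ are genuinely discontinuous forward solutions (piecewise constant, since $(V_n,W_n)$ is). A naive discrete Gr\"onwall on the increments $\Delta_k - \Delta_{k-1}$ produces a contraction factor $\prod_k(1+\|b\|_{\mathrm{Lip}}|\delta W_k|)$ controlled by the $1$-variation of $W_n$, which is not bounded here; the ``partition of small $p$-variation'' trick only works once you have a Young integral representation, i.e.\ after passing to Marcus/interpolated paths. This is exactly why the paper detours through the Marcus intermediaries. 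A second, minor point: in the $\alpha_{p\var}$ comparison both $((W_n,X_n),\cdot)$ and $((W_n,\tilde X_n),\cdot)$ carry the \emph{linear} path function $l_{r+d}$, not $\phi_{a_n,b_n}$ or $\phi_{a,b}$ (the latter appears only on the limit side), so your paragraph about $\phi_{a_n,b_n}\to\phi_{a,b}$ is superfluous---once $\|X_n-\tilde X_n\|_{p\var}\to 0$ is established, the $\alpha_{p\var}$ distance between the two linearly-interpolated pairs follows immediately. If you replace your direct forward-RDE comparison by the paper's Marcus chain $X_n \to \overline X_n \to \tilde X_n$, your argument becomes correct and essentially coincides with the paper's.
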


\begin{remark}
With the choices of $v$ and $T$ in Section \ref{sec:MPGD}, the assumptions in Theorem \ref{thm:main_app} are satisfied (see the proof in \cite{gottwald2021simulation}).
\end{remark}

Before proving Theorem~\ref{thm:main_app}, we start by proving the following lemma. 

\begin{lemma}\label{lm:p-var_path}
	Let $X\in D([0,1],\R^d)$, $p\ge 1$ such that $\|X\|_{p\var}<\infty$. Then for any path function $\phi$, 
	\[
	\|X^{\phi,\delta}\|_{p\var}= \|X^\phi\|_{p\var}\ge \|X\|_{p\var} = \|X^{l_d}\|_{p\var}.
	\]
\end{lemma}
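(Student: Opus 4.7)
The plan is to treat the three comparisons in the chain separately, since each rests on a different idea. For the first equality $\|X^{\phi,\delta}\|_{p\var} = \|X^\phi\|_{p\var}$, I will observe that both paths traverse exactly the same concatenation of $X$-pieces and $\phi$-interpolations in $\R^d$, differing only in how the unit parameter interval is apportioned among those pieces: by $r_k/(1+r)$ versus $\delta r_k/(1+\delta r)$. A piecewise-linear increasing bijection $\psi:[0,1]\to[0,1]$ matching these two allocations yields $X^{\phi,\delta} = X^\phi\circ\psi$, and since the definition of $p$-variation is intrinsic under continuous increasing bijections of the parameter interval (its defining supremum runs over all partitions), the equality follows.

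For the middle inequality $\|X^\phi\|_{p\var} \geq \|X\|_{p\var}$, I will lift any partition $0 = s_0 < \cdots < s_N = 1$ appearing in the supremum defining $\|X\|_{p\var}$ to the partition $\{\tau(s_i)/(1+r)\}$ of $[0,1]$. By Definition \ref{def:X^phi,delta} the construction of $\hat{X}$ gives $X^\phi(\tau(s_i)/(1+r)) = \hat{X}(\tau(s_i)) = X(s_i)$ (reading the appropriate one-sided limit at jump times), so the corresponding $p$-sums coincide; taking the supremum over partitions of $X$ then yields the inequality.

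For the final equality $\|X\|_{p\var} = \|X^{l_d}\|_{p\var}$, the $\geq$ direction is the previous step specialised to $\phi = l_d$. For the reverse $\leq$, I will take any partition $\mathcal{P}'$ computing $V_p(\mathcal{P}';X^{l_d})$ and \emph{collapse} the partition points lying strictly inside a jump-gap $[\tau(t_k-),\tau(t_k)]$ onto the two gap endpoints. Within such a gap, $\hat{X}$ is linear from $X(t_k-)$ to $X(t_k)$, so sub-increments take the form $\lambda_j(X(t_k)-X(t_k-))$ with $\lambda_j \geq 0$ and $\sum_j\lambda_j \leq 1$; the elementary bound $\lambda_j^p \leq \lambda_j$ (valid for $p\geq 1$ and $\lambda_j\leq 1$) yields $\sum_j\lambda_j^p \leq 1$, so collapsing does not decrease the $p$-sum. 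The resulting partition is the $\tau$-image of a partition of $[0,1]$ for $X$ with the same $p$-sum, giving the desired bound. The main subtlety I anticipate is handling partition subintervals of $\mathcal{P}'$ that straddle a gap boundary, for which I plan to first insert both endpoints of each crossed gap and then apply the collapsing step; alternatively, one may restrict attention to supremum-realising partitions that already include all gap endpoints, making the correspondence with partitions of $X$ immediate.
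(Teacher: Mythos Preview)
Your proposal is correct and follows essentially the same route as the paper: reparametrisation invariance for the first equality, restricting the supremum to partitions of the form $\{\tau(s_i)/(1+r)\}$ for the middle inequality, and for the last equality exploiting that intermediate points on the linear segment $[X(t_k-),X(t_k)]$ satisfy $\sum_j\lambda_j^p\le 1$ (the paper states this as $(c-a)^p+(b-c)^p\le(b-a)^p$ for $c\in[a,b]$). Your treatment is somewhat more detailed than the paper's---you discuss straddling intervals and collapsing explicitly---but the underlying argument is the same.
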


\begin{proof} It is obvious from the scaling invariance of  of the $p$-variation that~$\|X^{\phi,\delta}\|_{p\var}= \|X^\phi\|_{p\var}$. For the second inequality, using again the definition,
	\begin{align*}
	&\|X^\phi\|_{p\var} \\
	&=\|\hat{X}(\cdot(1+r))\|_{p\var}\\
	&= \sup_{0=t_0<t_1<\ldots<t_k=1} \left(\sum_{j=1}^k |\hat{X}(t_j(1+r))- \hat{X}(t_{j-1}(1+r))|^p\right)^{1/p}\\
	&\ge \sup_{0=t_0<t_1<\ldots<t_k=1, t_j(1+r)=\tau(s_j) \text{ for some }s_j\in[0,1]}\left(\sum_{j=1}^k |\hat{X}(t_j(1+r))- \hat{X}(t_{j-1}(1+r))|^p\right)^{1/p}\\
	&= \sup_{0=s_0<s_1<\ldots<s_k=1}\left(\sum_{j=1}^k |X(s_j)- X(s_{j-1})|^p\right)^{1/p} = \|X\|_{p\var},
	\end{align*}
	where the last equality follows from the fact that if $c$ lies in an interval $[a,b]$ then~$(c-a)^p+(b-c)^p\le (b-a)^p$ for $p\ge 1$. This completes the proof.
\end{proof}

As a first step for proving Theorem~\ref{thm:main_app}, we aim to prove a deterministic variant of Theorem \ref{thm:main_app}. In order to make the arguments more concise, let us neglect the first time-derivative term (with coefficients $a$ and $a_n$, for which the treatment is the same by considering the jump process $V_n:t\mapsto \lfloor nt\rfloor/n$) and consider the following. 

\begin{theorem}\label{thm:main_det}
Assume that $p\in(1,2)$ and $\{W_n\}_{n\ge 1}$ is a sequence in $D^{p\var}([0,1],\R^r)$ with finitely many jumps. Suppose that $\gamma>p$ and
\begin{itemize}
    \item $\lim_{n\to \infty}b_n= b$ in $C^\gamma(\mathbb{R}^d,\mathbb{R}^{d\times r})$ for some $b\in C^\gamma(\mathbb{R}^d,\mathbb{R}^{d\times r})$;
    \item $d_{\mathcal{SM}_1}(W_n,W)\to 0$ and $\sum|W_n(t)-W_n(t-)|^2\to 0$, where the sum is taken over all jump times of $W_n$.
\end{itemize}
Let $X_n$ be the solution of the forward RDE $\ds X_n = b_n(X_n)^-\ds W_n$ with $X(0)=x_n$. Let $X$ be the solution of the Marcus RDE~$\ds X= b(X)\diamond \ds W, X(0)=x$. Then it holds that
\[
((W_n,X_n), l_{r+d}) \to ((W,X),\phi_b) \text{ in } (\mathscr{D}([0,1],\R^{r+d}), \alpha_{p\var}) \text{ as } n\to\infty.
\]
\end{theorem}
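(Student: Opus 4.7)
My plan is to reduce the $\alpha_{p\var}$-convergence of path-trajectory pairs to a $p$-variation convergence statement for the corresponding continuous interpolated paths, and then to invoke Young-integral stability. Fix a summable sequence $\{r_k\}$ and a parameter $\delta > 0$, and introduce the continuous interpolations $\hat{Z}_n := (W_n, X_n)^{l_{r+d}, \delta}$ and $\hat{Z} := (W, X)^{\phi_b, \delta}$ on a common time interval. By the definition of $\alpha_{p\var}$ as the $\delta \downarrow 0$ limit of $\sigma_{p\var}$, it suffices to show $\sigma_{p\var}(\hat{Z}_n, \hat{Z}) \to 0$ as $n \to \infty$ for each sufficiently small $\delta$; a diagonal selection in $\delta$ will then yield the claimed $\alpha_{p\var}$-convergence.

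The central obstacle is reconciling two different jump conventions: the forward-RDE solution $X_n$ takes Euler-type jumps $\Delta X_n(t) = b_n(X_n(t-))\Delta W_n(t)$, whereas the Marcus/$\phi_b$ interpolation on the limit side uses the full $b$-flow along the linearly interpolated driver-jump. To bridge these, I would introduce an auxiliary path $X_n^\star$, defined as the Marcus solution of $\ds X_n^\star = b_n(X_n^\star) \diamond \ds W_n$ with $X_n^\star(0) = x_n$, and bound the distance between $\hat{Z}_n$ and $\hat{Z}_n^\star := (W_n, X_n^\star)^{\phi_{b_n}, \delta}$. On each jump, the Euler jump and the flow jump share the starting point and deviate by $O(|\Delta W_n(t)|^2 \|b_n\|_{C^\gamma}^2)$ through a Taylor expansion of the flow map $\pi_{b_n}$; this error propagates through subsequent continuous segments by a discrete Gronwall argument, giving $\|X_n - X_n^\star\|_\infty \lesssim \|b_n\|_{C^\gamma}^2 \sum_t |\Delta W_n(t)|^2$. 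A localized variant of this estimate controls the contribution of each jump-filling sub-interval to $\sigma_{p\var}(\hat{Z}_n, \hat{Z}_n^\star)$, and the hypothesis $\sum_t |\Delta W_n(t)|^2 \to 0$ combined with the uniform bound on $\|b_n\|_{C^\gamma}$ inherited from $b_n \to b$ in $C^\gamma$ makes this distance vanish.

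It remains to show $\sigma_{p\var}(\hat{Z}_n^\star, \hat{Z}) \to 0$. By construction, the $X$-components of both paths satisfy Young ODEs driven by the continuous piecewise-linear $W$-components $\hat{W}_n$ and $\hat{W}$: $\ds \hat{X}_n^\star = b_n(\hat{X}_n^\star)\,\ds \hat{W}_n$ with $\hat{X}_n^\star(0) = x_n$, and analogously for $\hat{X}$; both integrals are Young-interpretable since $p < 2$ and $\gamma > p$. The $\mathcal{SM}_1$-convergence $d_{\mathcal{SM}_1}(W_n, W) \to 0$ together with Remark~\ref{lm:a_infty-SM1} yields uniform convergence $\hat{W}_n \to \hat{W}$ for a suitable diagonal choice $\delta_n \downarrow 0$, and an interpolation inequality together with a $p'$-variation control for some $p' \in (1, p)$ upgrades this to $\hat{W}_n \to \hat{W}$ in $p$-variation. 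A final application of the Young-L\'oeve estimate (Proposition~\ref{prop:integral_p-var}) and a Gronwall-type continuity argument for Young ODEs under simultaneous perturbation of the driver and the coefficient then gives $\hat{X}_n^\star \to \hat{X}$ in $p$-variation. The hardest step of the plan is this last one, which requires carefully propagating the coefficient perturbation $\|b_n - b\|_{C^\gamma} \to 0$ jointly with the driver perturbation through the Young-L\'oeve remainder, uniformly in the interpolation parameter $\delta$, so as to close the chain of estimates.
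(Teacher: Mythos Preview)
Your overall decomposition mirrors the paper's: both introduce the intermediate Marcus solution $X_n^\star$ (the paper writes $\overline X_n$) driven by $W_n$ with coefficient $b_n$, and both use the hypothesis $\sum_t|\Delta W_n(t)|^2\to 0$ to pass from the forward solution $X_n$ to this Marcus solution. The forward-versus-Marcus comparison you sketch is precisely Lemma~\ref{lm:jumps}, which the paper invokes directly.

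The gap is in your final step. You claim that $\mathcal{SM}_1$-convergence of the drivers upgrades to $p$-variation convergence of the interpolated paths via ``an interpolation inequality together with a $p'$-variation control for some $p'\in(1,p)$''. No such control is available: the hypotheses only place each $W_n$ individually in $D^{p\var}$, with no uniform bound even in $p$-variation, let alone in a stronger $p'$-variation with $p'<p$. Moreover, the standard interpolation runs the other way---uniform $p$-variation bounds plus uniform convergence yield $p''$-variation convergence only for $p''>p$, not for $p$ itself. So as written you can neither justify the premise nor reach the stated conclusion.

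The paper avoids this by separating the driver perturbation from the coefficient perturbation rather than handling them jointly. For the driver, it invokes the continuity of the Marcus solution map with \emph{fixed} coefficient $b$ (from \cite{chevyrev2020superdiffusive}) to obtain $((W_n,\tilde X_n),\phi_b)\to((W,X),\phi_b)$ in $\alpha_{p\var}$, where $\tilde X_n$ solves $\ds\tilde X_n=b(\tilde X_n)\diamond\ds W_n$. The coefficient perturbation $b_n\to b$ is then handled with the \emph{same} driver $W_n$ on both sides, via a Gronwall-type iteration of the Young--L\'oeve estimate (Proposition~\ref{prop:integral_p-var}) over a partition chosen so that the driver's $p$-variation on each subinterval is small; this gives $\|\overline X_n-\tilde X_n\|_{p\var}\le C(b,W)\,\|b_n-b\|_\infty$. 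Holding the driver fixed in this step is exactly what removes any need for $p$-variation convergence of the drivers themselves, whereas your plan of perturbing driver and coefficient simultaneously in one Young-ODE stability estimate is what forces the unjustified interpolation.
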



\subsection{Proof of Theorem \ref{thm:main_det}}

We will use the following lemma in the proof of Theorem~\ref{thm:main_det}. 

\begin{lemma}[\cite{chevyrev2020superdiffusive} Lemma 3.6]\label{lm:jumps}
Suppose that $W\in D([0,1],\R^r)$ has finitely many jumps. Let $X,\tilde{X}\in D([0,1],\R^d)$ be given by
\[
\ds X= b(X)^- \ds W, \quad \ds \tilde{X} = b(\tilde{X})\diamond\ds W, \quad X(0)=\tilde{X}(0)=x.
\]
Then
\[
\|X-\tilde{X}\|_{p\var} \le K\|b\|_{\mathrm{Lip}}\|b\|_\infty \sum_t|W(t)-W(t-)|^2,
\]
where $K$ depends on $\|b\|_{C^\gamma}, \|W\|_{p\var}, \gamma, p$, and the sum is over all jump times $t$ of $W$.
\end{lemma}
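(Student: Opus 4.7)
The plan is to localize the difference $Z := X - \tilde{X}$ to the jump times of $W$ and propagate it via $p$-variation stability of the continuous Young RDE. Since $W$ has only finitely many jumps $0 \le t_1 < \dots < t_N \le 1$, on each open interval $(t_{k-1},t_k)$ the driver $W$ is continuous, so the forward and Marcus integrals coincide there and both $X,\tilde{X}$ solve the same Young RDE $\ds U = b(U)\,\ds W$. Hence $Z$ can grow only through (i) discrepancies introduced at each jump $t_k$ and (ii) continuous Young propagation between jumps.

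For (i), writing $\Delta_k := W(t_k)-W(t_k-)$, the forward solution takes the Euler step $X(t_k) = X(t_k-) + b(X(t_k-))\Delta_k$, while $\tilde{X}(t_k) = \psi_k(\tilde{X}(t_k-))$ with $\psi_k$ the time-$1$ flow of $\dot{Y} = b(Y)\Delta_k$. A Taylor estimate using $|Y(s)-y|\le s\|b\|_\infty|\Delta_k|$ together with Lipschitz continuity of $b$ gives, uniformly in $y$, $|\psi_k(y)-y-b(y)\Delta_k| \le \tfrac{1}{2}\|b\|_{\mathrm{Lip}}\|b\|_\infty |\Delta_k|^2$. Expanding $X$ and $\tilde{X}$ as Young integrals plus Marcus corrections then yields
\[
Z(t) = \int_0^t \bigl[b(X(s-)) - b(\tilde{X}(s-))\bigr]\,\ds W(s) - \sum_{s\le t} R_s, \quad |R_s|\le \tfrac{1}{2}\|b\|_{\mathrm{Lip}}\|b\|_\infty |\Delta W(s)|^2.
\]

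For (ii), I would apply the Young--Lo\`eve estimate (Proposition \ref{prop:integral_p-var}, extended to c\`adl\`ag integrands) together with the standard bound $\|b(X)-b(\tilde{X})\|_{p\var;I}\lesssim \|b\|_{C^\gamma}\|Z\|_{p\var;I}(1+\|X\|_{p\var;I}+\|\tilde{X}\|_{p\var;I})$, plus \emph{a priori} $p$-variation bounds on $X,\tilde{X}$ in terms of $\|W\|_{p\var}$, to obtain on any subinterval $I$ with left endpoint $s_I$,
\[
\|Z\|_{p\var;I}\le C_1\|W\|_{p\var;I}\|Z\|_{p\var;I} + C_2\|b\|_{\mathrm{Lip}}\|b\|_\infty \sum_{s\in I}|\Delta W(s)|^2 + |Z(s_I)|,
\]
with $C_1,C_2$ depending only on $\|b\|_{C^\gamma},\|W\|_{p\var},\gamma,p$. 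A greedy partition of $[0,1]$ into $M$ subintervals $I_1,\dots,I_M$ on each of which $C_1\|W\|_{p\var;I_j}\le 1/2$, with $M$ controlled solely by $\|W\|_{p\var},\|b\|_{C^\gamma},\gamma,p$, then lets me absorb the $\|Z\|_{p\var;I_j}$ term on each piece; iterating across the $M$ subintervals and using $Z(0)=0$ produces the stated estimate.

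The main technical obstacle is ensuring that $K$ is independent of the number $N$ of jumps. A naive per-jump iteration (propagate $X(t_k-)-\tilde{X}(t_k-)$ pointwise through the Marcus map $\psi_k$, which is Lipschitz with constant $1+O(\|b\|_{\mathrm{Lip}}|\Delta_k|)$) accumulates multiplicative factors $\prod_k(1+c|\Delta W_k|)$ that cannot be bounded by $\|W\|_{p\var}$ alone, since one only controls $\sum|\Delta W_k|^p \le \|W\|_{p\var}^p$ and not $\sum|\Delta W_k|$. Working throughout in $p$-variation and propagating via Young--Lo\`eve (which is number-of-jumps-free) circumvents this; the greedy $p$-variation partition into $M$ pieces does the remaining bookkeeping.
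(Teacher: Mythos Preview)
The paper does not prove this lemma; it is quoted verbatim from \cite{chevyrev2020superdiffusive} and used as a black box in the proof of Theorem~\ref{thm:main_det}. So there is no ``paper's own proof'' to compare against---your sketch is essentially a reconstruction of the argument one expects in the cited reference, and the overall strategy (write $Z$ as a Young integral of $b(X)-b(\tilde X)$ against $W$ plus the Marcus correction terms, control the former via Young--Lo\`eve and a greedy $p$-variation partition, and bound the latter by $\tfrac12\|b\|_{\mathrm{Lip}}\|b\|_\infty\sum|\Delta W|^2$) is correct.

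One point deserves explicit attention. Your greedy partition requires $C_1\|W\|_{p\var;I_j}\le 1/2$ on every subinterval, but this is impossible across any single jump with $C_1|\Delta W(t_k)|>1/2$. You need to excise such ``large'' jumps and treat them pointwise: at a large jump one has
\[
|Z(t_k)|\le (1+\|b\|_{\mathrm{Lip}}|\Delta_k|)\,|Z(t_k-)| + \tfrac12\|b\|_{\mathrm{Lip}}\|b\|_\infty|\Delta_k|^2,
\]
and the multiplicative factor is harmless because there are at most $(2C_1\|W\|_{p\var})^p$ such jumps and each factor is at most $1+\|b\|_{\mathrm{Lip}}\|W\|_{p\var}$. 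Between large jumps the greedy partition goes through as you describe, with the number of pieces again bounded in terms of $\|W\|_{p\var}$. With this amendment the constant $K$ is genuinely independent of the number of jumps, which was exactly the issue you flagged in your last paragraph. The remaining ingredients you invoke---a c\`adl\`ag version of the Young--Lo\`eve bound and the estimate $\|b(X)-b(\tilde X)\|_{p\var}\lesssim \|b\|_{C^\gamma}(1+\|X\|_{p\var}+\|\tilde X\|_{p\var})\|Z\|_{p\var}$ for $\gamma>p$---are standard and available in the rough paths literature.
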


We are now ready to prove Theorem~\ref{thm:main_det}.

\begin{proof}[Proof of Theorem~\ref{thm:main_det}]
Let $\tilde{X}_n$  be the solution to the Marcus RDE
\[
\ds \tilde{X}_n =  b(\tilde{X}_n)\diamond \ds W_n, \tilde{X}_n(0)=x_n.
\]
The continuity of the solution map for generalised geometric RDEs gives that
\begin{equation}\label{eq:1}
\alpha_{p\var}(((W_n,\tilde{X}_n),\phi_b), ((W,X),\phi_b))=0.
\end{equation}
Then let $\overline{X}_n$ be the solution to the Marcus RDE
\[
\ds\overline{X}_n = b_n(\overline{X}_n) \diamond \ds W_n,\quad \overline{X}_n(0)=x_n.
\]
On any subinterval $[s_1,s_2]$, we can compare solutions to the Marcus RDE with $b_n$ and $b$ by using Proposition \ref{prop:integral_p-var}:
\begin{align*}
    &\|\overline{X}_n-\tilde{X}_n\|_{p\var}\\ =
    &\left\|\int_0^.(b_n(\overline{X}_n)-b(\tilde{X}))\ds W^{l_r}  \right\|_{p\var}\\ \le
    & \left\|\int_0^.(b_n(\overline{X}_n)-b(\overline{X}_n))dW^{l_r}\right\|_{p\var} + \left\|\int_0^.(b(\overline{X}_n)-b(\tilde{X}_n))dW^{l_r}\right\|_{p\var} \\ \le
    & \|b_n-b\|_\infty\|W^{l_r}\|_{p\var} + C_1(\|b\|_{\text{Lip}}\|\overline{X}_n-\tilde{X}_n\|_{p\var}+|b(\overline{X}_n)-b(\tilde{X}_n)|(s_1))\|W^{l_r}\|_{p\var} \\ =
    & \|b_n-b\|_\infty\|W\|_{p\var} + C_1(\|b\|_{\text{Lip}}\|\overline{X}_n-\tilde{X}_n\|_{p\var}+|b(\overline{X}_n)-b(\tilde{X}_n)|(s_1))\|W\|_{p\var},
\end{align*}
where the last inequality is due to Lemma \ref{lm:p-var_path}.
If we choose a subdivision $0=s_0<s_1<\ldots<s_{n-1}<s_n=1$ of the interval $[0,t]$ such that for any $i=0,1,\ldots,n-1$, \[
C_1\|b\|_{\text{Lip}}\|W\|_{p\var,[s_i,s_{i+1}]} \le c< 1,
\]
then on the interval $[0,s_1]$,
\begin{align*}
\|\overline{X}_n-\tilde{X}_n\|_{p\var,[0,s_1]}
&\le \frac{ \|b_n-b\|_\infty \|W\|_{p\var,[0,s_1]}+ C_1\|W\|_{p\var,[0,s_1]}|b(\overline{X}_n)-b(\tilde{X}_n)|(0)}{1- C_1\|b\|_{\text{Lip}}\|W\|_{p\var,[0,s_1]} }\\
&\le  \|b_n-b\|_\infty \|W\|_{p\var,[0,s_1]}/(1-c).
\end{align*}
Similarly on the interval $[s_1,s_2]$,
\begin{align*}
&\|\overline{X}_n - \tilde{X}_n\|_{p\var,[s_1,s_2]}\\ \le
& \frac{ \|b_n-b\|_\infty \|W\|_{p\var,[s_1,s_2]} + C_1\|W\|_{p\var,[s_1,s_2]}|b(\overline{X}_n) - b(\tilde{X}_n)|(s_1)}{1- C_1\|b\|_{\text{Lip}}\|W\|_{p\var,[s_1,s_2]}}\\ \le
& \left( \|b_n-b\|_\infty \|W\|_{p\var,[s_1,s_2]} + C_1\|W\|_{p\var,[s_1,s_2]}\|b\|_{\text{Lip}}\|\overline{X}_n - \tilde{X}_n\|_{p\var,[0,s_1]} \right)/(1-c) \\ \le
& \|b_n-b\|_\infty \|W\|_{p\var,[s_1,s_2]}/(1-c) + C_1\|b\|_{\text{Lip}}\|b_n-b\|_\infty\|W\|_{p\var,[0,s_1]}\|W\|_{p\var,[s_1,s_2]} /(1-c)^2.
\end{align*}
It is not hard to obtain by induction that there exists $C(b,W)>0$ such that
\[
\|\overline{X}_n - \tilde{X}_n\|_{p\var,[0,1]}\le C(b, W)\|b_n-b\|_\infty.  
\]
Therefore
\begin{equation}\label{eq:2}
  \alpha_{p\var}(((W_n,\tilde{X}_n),\phi_b), ((W_n,\overline{X}_n),\phi_{b_n}))=0.  
\end{equation}

Since $b_n\to b$ in $C^\gamma(\R^d,\R^{d\times r})$, $\|b_n\|_{\mathrm{Lip}}$ and $\|b_n\|_\infty$ are uniformly bounded independent of $n$. Then it follows from Lemma \ref{lm:jumps} that
\[
\|X_n- \overline{X}_n\|_{p\var}\le K\|b_n\|_{\mathrm{Lip}}\|b_n\|_\infty\sum_t\|W_n(t)-W_n(t-)\| \to 0 \text{ as }n\to\infty.
\]
Then the same argument in \cite{chevyrev2020superdiffusive} gives that for any $p'>p$,
\begin{equation}\label{eq:3}
\lim_{n\to\infty} \alpha_{p'\var}(((W_n,\overline{X}_n),l_{r+d}), ((W_n,X_n),l_{r+d}))=0.
\end{equation}
Recall also from the \cite{chevyrev2020superdiffusive} that
\begin{equation}\label{eq:4}
\lim_{n\to\infty}\alpha_{p'\var} (((W_n,\overline{X}_n),\phi_{b_n}), (W_n,\overline{X}_n),l_{r+d}))=0.
\end{equation}
Finally Theorem \ref{thm:main_det} follows from combining \eqref{eq:1}, \eqref{eq:2}, \eqref{eq:3} and \eqref{eq:4}. This completes the proof.
\end{proof}

\subsection{Proof of Theorem \ref{thm:main_app}}

\begin{proof}[Proof of Theorem \ref{thm:main_app}]
 If we write $V_n(t)=\lfloor tn\rfloor/n$, the assumptions of Theorem \ref{thm:main_app} and the same treatment in \cite{chevyrev2020superdiffusive} implies that up to subtracting a subsequence, $(V_n,W_n)\to (\mathrm{id},\mathrm{L})$ a.s. in $\alpha_{p\var}$ for any $p>\alpha$. Then it follows from Theorem \ref{thm:main_det} that for $p'>p$ and along each subsequencial limit of $(V_n,W_n)$ as $n\to\infty$,
 \[
 ((W_n,X_n),l_{r+d}) \to ((\mathrm{L},X),\phi_{a,b}) \, \text{ in }\, (\mathcal{D}([0,1],\R^{r+d}),\alpha_{p'\var}).
 \]
 Therefore $((W_n,X_n),l_{r+d}) \overset{\mathrm{(d)}}{\to} ((\mathrm{L},X),\phi_{a,b}) \, \text{ in }\, (\mathcal{D}([0,1],\R^{r+d}),\alpha_{p\var})$ for any $p>\alpha$. This completes the proof.
 
\end{proof}

\section{Generalization Bound: Proof of Theorem~\ref{thm:gen}}
\label{sec:proofgen}

To begin with, we need a geometric regularity assumption over the trajectory of the multiscale perturbed gradient flow, which is common for random fractal processes given as solutions to stochastic differential equations; see \cite{hodgkinson2021generalization}. This assumption ensures that the box-counting (Minkowski) dimension coincides with the Hausdorff dimension of the trajectory.

For any $x\in \R^d$, let $(X_t)_{t\in [0,1]}$ be a solution to the stochastic differential equation \eqref{eq:sde} started from $x$, and let $\mathcal{A}$ be the infinitesimal generator of $(X_t)_{t\in [0,1]}$ (let us take it for granted that it exists) defined by
\[
\mathcal{A}u(x) = \lim_{t\to 0}\frac{\mathbb{E}_x[u(X_t)]-u(x)}{t} \, \text{ for any }\, u\in C^\infty_c(\mathbb{R}^d).
\]
Let $q:\mathbb{R}^m\times\mathbb{R}^m \to \mathbb{C}$ be the \emph{symbol} of $\mathcal{A}$ such that $
\mathcal{A}u(x) = -\int_{\mathbb{R}^m} e^{i\scp{\xi}{x}}q(x,\xi)\hat{u}(\xi) \ds \xi$,
(informally saying, $-q(x,D)u(x)$), where~$\hat{u}(\xi)= \int_{\mathbb{R}^m} e^{-i\scp{\xi}{x}}u(x) \ds x$ is the Fourier transform of $u$.

\begin{ass}\label{assmp:traj_regularity}
For almost every $\mathcal{X}$, there exists a finite Borel measure $\mu$ on $\mathcal{X}$ and $\rho >0$ such that $
C_\rho \coloneqq \inf_{0 < r < \rho, x \in \mathcal{X}} \frac{\mu(B_r(x))}{r^\alpha \mu(\mathcal{X})} > 0$
for $\mu$-almost every $x$. 
\end{ass}

If $(X_t)_{t\ge 0}$ is a solution with $X_0=0$ to $\ds X_t = b \diamond \ds \mathrm{L}^\alpha_t$ for fixed $b\in \R^{d\times d}$ and a symmetric $d$-dimensional L\'evy process $\mathrm{L}^\alpha$, then $(X_t)_{t\ge 0}$ is also a $\alpha$-stable L\'evy process with symbol independent of $x$:
\[
\psi_{X}(\xi) = |\xi^T b|^\alpha.
\]

We also need the following statistical regularity assumption, as discussed in the paragraph before Theorem \ref{thm:gen}. 

\begin{ass}\label{assmp:haus_dim}
There exists $x^\star \in \mathbb{R}^d$ such that $a(x^*)=0$. Let $(X^*_t)_{t\ge 0}$ to the solution of $\ds X_t = b(x^*) \diamond \ds \mathrm{L}^\alpha_t$ and write $\tilde{q}(x,\xi):= q(x,\xi)-\psi_{X^*}(\xi)$. For almost-every $S_n$:
\begin{itemize}
    \item $b(x^\star)$  is positive definite;
    \item $|\partial^\mathbf{j}_x \tilde{q}(x,\xi)|\le \Phi_\mathbf{j}(x)(1+\kappa_0|\xi|^\alpha) \text{ with }\mathbf{j}\in \mathbb{N}_0^m,\, |\mathbf{j}|\le m+1 \text{ for some }\Phi_\mathbf{j} \in L^1(\mathbb{R}^m);$
    \item $q(x,0)=0 \text{ and }\|\Phi_0\|_\infty<\infty.$
\end{itemize}
\end{ass}

Some remarks on Assumption \ref{assmp:haus_dim} are now in place.  It is natural to assume the existence of $x^*$ such that $a(x^*) = 0$ (since $a = - \nabla \hat{\mathcal{R}}(x, S_n)$ in our MPGD setup). For the first point, the positive definiteness of $b(x^*)$ can be satisfied by simply choosing $b$ to be the identity map. For the second point, recall that $\tilde{q}(x,\xi) = q(x, \xi) - \psi_{x^*}(\xi)$ and we require that:
$|\partial^\mathbf{j}_x \tilde{q}(x,\xi)|\le \Phi_\mathbf{j}(x)(1+\kappa_0|\xi|^\alpha) \text{ with }\mathbf{j}\in \mathbb{N}_0^m,\, |\mathbf{j}|\le m+1 \text{ for some }\Phi_\mathbf{j} \in L^1(\mathbb{R}^m)$.

Here $X^*$ is an $\alpha$-stable L\'evy process, and its characteristic function $\psi_{x^*}(\xi)$ is given by $|\xi^T b(x^*)|^\alpha$ (see Eq. \eqref{eq_charexp}). Moreover, $q(x, \xi)$ is nothing but \eqref{eq_charexp} with the $b, \Sigma, \mu$  depending on $x$. It is not hard to see that the above assumption holds if the $b, \Sigma, \mu$ depend smoothly on $x$. The third point is equivalent to saying that the solution to the SDE exists almost surely on infinite time interval (see \cite{schilling_conservativeness}), which is the case for the perturbed gradient flow with respect to second differentiable loss.

With all these ingredients in place, we now prove Theorem \ref{thm:gen}. 
\begin{proof}
We observe that the ellipticity condition holds since $b(x^\star)$ is positive  definite.:
\[
1+ \psi_{X^*}(\xi) \ge \gamma_1 (1+\kappa_0|\xi|^\alpha) \text{ for some } \gamma_1, \,\kappa_0>0.
\]

One can also check that $
 \alpha :=  \inf\{\lambda\ge 0: \lim_{|\xi|\to\infty}\frac{|\psi_{X^*}(\xi)|}{|\xi|^\lambda}=0\}.$ Under Assumption \ref{assmp:haus_dim}, it follows from \cite[Theorem 4]{schilling1998feller} that
\[
\dim_{\mathrm{H}}(\mathcal{X}) \le \alpha \quad \text{ almost surely,}
\]
where $\dim_{\mathrm{H}}$ denotes the Hausdorff dimension.

Then, under Assumption \ref{assmp:traj_regularity}, the result follows from \cite[Theorem 1]{hodgkinson2021generalization} and \cite[Corollary 1]{hodgkinson2021generalization}. This completes the proof. 
\end{proof}

\section{Implicit Regularization: Proof of Theorem \ref{thm_implreg}}
\label{app_sec:implreg}

In this section, we provide  proof to Theorem \ref{app_sec:implreg}.

Let $r_1 = 1$, $r_2 = d$ as assumed in Theorem \ref{thm_implreg}. Upon the rescalings $\mu = \mu_0 \epsilon$ and $\sigma = \sigma_0 \epsilon$ with $\epsilon > 0$ a small parameter, the MPGD recursion Eq. \eqref{eq_ouralg} becomes:
\begin{align} \label{eq_mpgd_rescaled}
    \xn{k+1} &=  \xn{k} + f(\xn{k}) + \epsilon g(\xn{k}, y_k^{(1)}, y_k^{(2)}), 
\end{align}
for $k=0,1,2, \dots$, 
where 
\begin{align}
     f(\xn{k}) &= - \frac{1}{m}  \nabla \hat{\mathcal{R}}(\xn{k}) =: f_k^{(m)}, \\
     g(\xn{k}, y_k^{(1)}, y_k^{(2)}) &= -  \frac{\mu_0}{m^{\frac{1}{\alpha_1}}} v_{1}(y_k^{(1)})  \xn{k} + \frac{\sigma_0}{m^{\frac{1}{\alpha_2}}} v_{2}(y_k^{(2)}) =: g_k^{(m)},
\end{align}
with the $y^{(i)}$ satisfying $y^{(i)}_{k+1} = f^{(i)}(y_k^{(i)})$ for $i=1,2$.

Consider the following hierarchy of recursive equations. For $k=0,1,2,\dots$:
\begin{align}
    \overline{x}_{k+1}^{(m)} &= \overline{x}_{k}^{(m)} - \frac{1}{m} \nabla \hat{\mathcal{R}}(\overline{x}_k^{(m)}), \label{eq_hie1} \\
    \phi_{k+1}^{(m)} &= J_k^{(m)} \phi_{k}^{(m)} + g_k^{(m)}, \label{eq_hie2} \\
    \varphi_{k+1}^{(m)} &= J_k^{(m)} \varphi_{k}^{(m)} + \frac{1}{2} \sum_{i,j=1}^d \frac{\partial^2  f_k^{(m)}} {\partial x^i \partial x^j}(\overline{x}_k^{(m)}, \phi_k^{(m)})  [\phi_k^{(m)}]^i [\phi_k^{(m)}]^j + \sum_{i=1}^d \frac{\partial g_k^{(m)}}{\partial x^i}(\overline{x}_k^{(m)}) [\phi_k^{(m)}]^i, \label{eq_hie3}
\end{align}
with the initial conditions $\overline{x}_0^{(m)} = x_0^{(m)}$, $\phi_0^{(m)} = \varphi_0^{(m)} = 0$, and
\begin{align}
    J_k^{(m)} &:= I + f'(\overline{x}_k^{(m)}) = I - \frac{1}{m}  \nabla^2 \hat{\mathcal{R}}(\overline{x}_k^{(m)}).
\end{align}

Informally, the processes $\overline{x}^{(m)}, \phi^{(m)}$ and $\varphi^{(m)}$ constitute the zeroth-, first- and second-order terms in a pathwise Taylor expansion
of $x^{(m)}$ about $\epsilon = 0$, i.e., for $k=0,1,2, \dots$, $x_k^{(m)} = \overline{x}_k^{(m)} + \epsilon \phi^{(m)}_k + \epsilon^2 \varphi^{(m)}_k + \mathcal{O}(\epsilon^3)$, as $\epsilon \to 0$. 

Now we are going to compute a second-order Taylor expansion
of $\hat{\mathcal{R}}(x_k^{(m)})$ about $\epsilon = 0$. This is the content of the following lemma, which is adapted from Theorem 9 in \cite{lim2021noisy} to our setting.

\begin{lemma} \label{lem_aux}
Under the assumptions of Theorem \ref{thm_implreg}, we have:
\begin{align}
    \hat{\mathcal{R}}(x_k^{(m)}) &= \hat{\mathcal{R}}(\overline{x}_k^{(m)}) + \epsilon \nabla \hat{\mathcal{R}}(\overline{x}_k^{(m)})^T \phi_k^{(m)} + \epsilon^2 \left(\hat{\mathcal{R}}(\overline{x}_k^{(m)})^T \varphi_k^{(m)} + \frac{1}{2} (\phi_k^{(m)})^T \nabla^2 \hat{\mathcal{R}}(\overline{x}_k^{(m)}) \phi_k^{(m)}   \right) \nonumber \\
    &\ \ \  \ \ + \mathcal{O}(\epsilon^3),
\end{align}
as $\epsilon \to 0$, where the $(\overline{x}_k^{(m)}, \phi^{(m)}, \varphi^{(m)})$ satisfy Eq. \eqref{eq_hie1}-\eqref{eq_hie3}.
\end{lemma}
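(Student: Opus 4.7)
The plan is to derive a pathwise second-order Taylor expansion of the iterates $x_k^{(m)}$ in powers of $\epsilon$ about the unperturbed GD trajectory $\overline{x}_k^{(m)}$, and then substitute it into a Taylor expansion of $\hat{\mathcal{R}}$ about the same base point. Concretely, I would show by induction on $k$ that
\[
x_k^{(m)} = \overline{x}_k^{(m)} + \epsilon\,\phi_k^{(m)} + \epsilon^2\,\varphi_k^{(m)} + \mathcal{O}(\epsilon^3) \text{ as } \epsilon\to 0,
\]
with $(\overline{x}_k^{(m)},\phi_k^{(m)},\varphi_k^{(m)})$ given by \eqref{eq_hie1}--\eqref{eq_hie3}; the base case $k=0$ is immediate from the vanishing initial conditions $\phi_0^{(m)}=\varphi_0^{(m)}=0$.

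For the inductive step, I would substitute the ansatz at step $k$ into the rescaled recursion \eqref{eq_mpgd_rescaled}, Taylor-expand the drift $f(\cdot)=-\tfrac{1}{m}\nabla\hat{\mathcal{R}}(\cdot)$ around $\overline{x}_k^{(m)}$ to second order (the second-order Taylor remainder being controlled pointwise by $|z_k|^3=\mathcal{O}(\epsilon^3)$, where $z_k:=\epsilon\phi_k^{(m)}+\epsilon^2\varphi_k^{(m)}+\mathcal{O}(\epsilon^3)$, thanks to the Lipschitz continuity of the partials of $\nabla\hat{\mathcal{R}}$ up to order three), and expand $g(\cdot,y_k^{(1)},y_k^{(2)})$ to first order --- which is exact since $g$ is linear in its first argument. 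Collecting the coefficients of $\epsilon^0$, $\epsilon^1$, $\epsilon^2$ then reproduces exactly the recursions \eqref{eq_hie1}, \eqref{eq_hie2}, \eqref{eq_hie3}, so the induction closes.

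With the pathwise expansion in hand, I would Taylor-expand $\hat{\mathcal{R}}$ about $\overline{x}_k^{(m)}$ to second order:
\[
\hat{\mathcal{R}}(\overline{x}_k^{(m)}+z_k) = \hat{\mathcal{R}}(\overline{x}_k^{(m)}) + \nabla\hat{\mathcal{R}}(\overline{x}_k^{(m)})^T z_k + \tfrac{1}{2}\, z_k^T \nabla^2\hat{\mathcal{R}}(\overline{x}_k^{(m)})\, z_k + \mathcal{O}(|z_k|^3).
\]
The linear term produces $\epsilon\,\nabla\hat{\mathcal{R}}^T\phi_k^{(m)} + \epsilon^2\,\nabla\hat{\mathcal{R}}^T\varphi_k^{(m)} + \mathcal{O}(\epsilon^3)$, and in the quadratic form only $\epsilon^2(\phi_k^{(m)})^T\nabla^2\hat{\mathcal{R}}\,\phi_k^{(m)}$ survives at order $\epsilon^2$, with all cross-terms of order $\epsilon^3$ or higher. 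Collecting the powers of $\epsilon$ yields the claimed identity.

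The main obstacle will be the bookkeeping of the remainders: I need to ensure that every absorbed error is genuinely $\mathcal{O}(\epsilon^3)$ uniformly in $\epsilon$ for the fixed $k$ and $m$ of the statement, rather than merely pointwise at each iteration. Because the iterate index $k$ and the scale $m$ are fixed, the induction only requires the derivatives of $f$, $g$, and $\hat{\mathcal{R}}$ to be bounded on a compact neighborhood of the finite set $\{\overline{x}_j^{(m)}\}_{j\le k}$, which is guaranteed by the Lipschitz hypotheses; hence the remainders compose cleanly from step to step. A strengthening of the lemma uniform in $k$ as $k\to\infty$ would instead require Gr\"onwall-type control of the linearized flow driven by the Jacobians $J_j^{(m)}$, but this is not needed here.
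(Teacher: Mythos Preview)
Your proposal is correct and follows exactly the natural route: establish the pathwise expansion $x_k^{(m)} = \overline{x}_k^{(m)} + \epsilon\,\phi_k^{(m)} + \epsilon^2\,\varphi_k^{(m)} + \mathcal{O}(\epsilon^3)$ by induction on $k$ (matching coefficients in powers of $\epsilon$ after Taylor-expanding $f$ and $g$ in the recursion), then substitute into a second-order Taylor expansion of $\hat{\mathcal{R}}$ about $\overline{x}_k^{(m)}$. The paper does not supply an independent proof of this lemma but instead adapts it from Theorem~9 of \cite{lim2021noisy}, whose argument is precisely the perturbative hierarchy you describe; your remark that fixing $k$ and $m$ localizes the analysis to a compact neighborhood of the finitely many base points $\{\overline{x}_j^{(m)}\}_{j\le k}$ is the right justification for the uniformity of the $\mathcal{O}(\epsilon^3)$ remainders.
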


We now prove Theorem \ref{thm_implreg}.
\begin{proof}[Proof of Theorem \ref{thm_implreg}]
Let us fix the $m$ and $S_n$. To prove Theorem \ref{thm_implreg}, we need to compute $\mathbb{E} \hat{\mathcal{R}}(x_k^{(m)})$, where the expectation is with respect to the randomness in the $y^{(i)}_0$. Since, $\mathbb{E}[v_1(y^{(1)}_k)] = \mathbb{E}[v_2(y^{(2)}_k)] = 0  $ for all $k$ by assumption, we have $\mathbb{E} \phi_k^{(m)} = 0$ for all $k$, and applying Lemma \ref{lem_aux}:
\begin{align}
     \mathbb{E} \hat{\mathcal{R}}(x_k^{(m)}) &= \hat{\mathcal{R}}(\overline{x}_k^{(m)}) +  \epsilon^2 \left(\hat{\mathcal{R}}(\overline{x}_k^{(m)})^T \mathbb{E} \varphi_k^{(m)} + \frac{1}{2} \mathbb{E} \left[ (\phi_k^{(m)})^T \nabla^2 \hat{\mathcal{R}}(\overline{x}_k^{(m)}) \phi_k^{(m)} \right]  \right) + \mathcal{O}(\epsilon^3). \label{eq_expansion}
\end{align}

It remains to compute the $\mathbb{E} \varphi_k^{(m)}$ and $\mathbb{E} \left[ (\phi_k^{(m)})^T \nabla^2 \hat{\mathcal{R}}(\overline{x}_k^{(m)}) \phi_k^{(m)} \right]$ in the above expansion. 

Iterating Eq. \ref{eq_hie2}, we obtain
\begin{align}
    \phi_k^{(m)} &= \sum_{i=1}^k \left(\prod_{j=i}^{k-1} J_j^{(m)} \right) g_{i-1}^{(m)},
\end{align}
for $k=1,2,\dots$.
Similarly, iterating Eq. \ref{eq_hie3}, we obtain:
\begin{align}
    \varphi_k^{(m)} &= \sum_{i=1}^k \left(\prod_{j=i}^{k-1} J_j^{(m)} \right) \left[ \frac{1}{2} \sum_{p,q=1}^d \frac{\partial^2  f_{i-1}^{(m)}} {\partial x^p \partial x^q}(\overline{x}_{i-1}^{(m)})  [\phi_{i-1}^{(m)}]^p [\phi_{i-1}^{(m)}]^q + \sum_{l=1}^d \frac{\partial g_{i-1}^{(m)}}{\partial x^l}(\overline{x}_{i-1}^{(m)}) [\phi_{i-1}^{(m)}]^l  \right]. \label{eq_phi}
\end{align}

Now, we compute: 
$\frac{\partial g_{i-1}^{(m)}}{\partial x^l}(\overline{x}_{i-1}^{(m)}) =  - \frac{\mu_0}{m^{\frac{1}{\alpha_1}}} v_{1}(y_{i-1}^{(1)})$ for all $l$, and 
\begin{align}
    \frac{\partial^2  f_{i-1}^{(m)}} {\partial x^p \partial x^q}(\overline{x}_{i-1}^{(m)})  &= -\frac{1}{m} \frac{\partial^2 \nabla \hat{\mathcal{R}}}{\partial x^p \partial x^q}(\overline{x}_{i-1}^{(m)}).
\end{align}
Substituting the above formula into Eq. \eqref{eq_phi} and then taking expectation, we obtain, using the fact that $\mathbb{E} v_{1}(y_{i-1}^{(1)}) = 0$ for all $i$:
\begin{align}
    \mathbb{E} \varphi_k^{(m)} &= - \frac{1}{2m} \sum_{i=1}^k \left(\prod_{j=i}^{k-1} J_j^{(m)} \right)  \sum_{p,q=1}^d \frac{\partial^2 \nabla \hat{\mathcal{R}}} {\partial x^p \partial x^q}(\overline{x}_{i-1}^{(m)}) \cdot \mathbb{E}[ [\phi_{i-1}^{(m)}]^p [\phi_{i-1}^{(m)}]^q] =:  - \frac{1}{2} \lambda_k^{(m)},
\end{align}
with the $l$th component of:
\begin{equation}
    - \frac{1}{2m} \sum_{i=1}^k \sum_{r=1}^d \left[\prod_{j=i}^{k-1} J_j^{(m)} \right]^{lr}  \sum_{p,q=1}^d \frac{\partial^2 \nabla [\hat{\mathcal{R}}]^r} {\partial x^p \partial x^q}(\overline{x}_{i-1}^{(m)}) \cdot \mathbb{E}[ [\phi_{i-1}^{(m)}]^p [\phi_{i-1}^{(m)}]^q].
\end{equation}

Now, for $k=0,1,2,\dots$, one can compute, using the assumption that $y_0^{(1)}$ and $y_0^{(2)}$ are independent (and thus $\mathbb{E}[v_1(y_{i_1 - 1}^{(1)}) [v_2]^r(y_{i_2 - 1}^{(2)})] = 0$ for all $r, i_1, i_2$), 
\begin{align}
    \mathbb{E}[ [\phi_{k}^{(m)}]^p [\phi_{k}^{(m)}]^q] &= \sum_{i_1,i_2 =1}^k \sum_{r,s=1}^d [\Phi_{i_1}^{(m)}]^{pr} [\Phi_{i_2}^{(m)}]^{qs} \bigg(\frac{\mu_0^2}{m^{2/\alpha_1}} \mathbb{E}[v_1(y_{i_1 - 1}^{(1)}) v_1(y_{i_2 - 1}^{(1)})] \cdot [x_{i_1 - 1}^{(m)}]^r [x_{i_2 - 1}^{(m)}]^s \nonumber \\ 
    &\ \ \ \ \ + \frac{\sigma_0^2}{m^{2/\alpha_2}} \mathbb{E}[[v_2]^r(y_{i_1 - 1}^{(2)}) [v_2]^s(y_{i_2 - 1}^{(2)})]   \bigg), \label{eq_appcov}
\end{align}
where the $\Phi_{i}^{(m)} := \prod_{j=i}^{k-1} (I-\frac{1}{m} \nabla^2  \hat{\mathcal{R}}(\overline{x}_{j}^{(m)}))$, with the empty product taken to be the identity by convention.

The $l$th component of $\lambda_k^{(m)}$ can thus be written as: 
\begin{align}
    [\lambda_k^{(m)}]^l &= \frac{1}{m} \sum_{i=1}^k \sum_{j = 1}^d  [\Phi_{i}^{(m)}]^{lj} tr\left( C_{i-1}^{(m)} \nabla^2 [\nabla \hat{\mathcal{R}}(\overline{x}_{i-1}^{(m)}) ]^j  \right), \label{eq_term1}
\end{align}
where the $C_{i-1}^{(m)}$ are covariance matrices with the $(p,q)$-entry of $\mathbb{E}[ [\phi_{i-1}^{(m)}]^p [\phi_{i-1}^{(m)}]^q]
$ whose expression is given in Eq. \ref{eq_appcov}.

Lastly, we compute:
\begin{align}
    \mathbb{E} \left[ (\phi_k^{(m)})^T \nabla^2 \hat{\mathcal{R}}(\overline{x}_k^{(m)}) \phi_k^{(m)} \right] &=  \mathbb{E} \left[ \sum_{p,q = 1}^d [\phi_k^{(m)}]^p [\nabla^2 \hat{\mathcal{R}}(\overline{x}_k^{(m)})]^{pq} [\phi_k^{(m)}]^q \right] \\
    &=  \sum_{p,q = 1}^d \mathbb{E}[ [\phi_k^{(m)}]^p [\phi_k^{(m)}]^q ] \cdot [\nabla^2 \hat{\mathcal{R}}(\overline{x}_k^{(m)})]^{pq} \\
    &= tr\left( C_k^{(m)} \nabla^2 \hat{\mathcal{R}}(\overline{x}_k^{(m)}) \right). \label{eq_term2}
\end{align}

Substituting \eqref{eq_term1} and \eqref{eq_term2} into Eq. \eqref{eq_expansion}, we arrive at Eq. \eqref{thm3_eqn} in Theorem \ref{thm_implreg}.  
\end{proof}

\section{Additional Empirical Results and Details}
\label{app_sec:addexps}

In this section, we provide  additional empirical results, as well as the details and additional results for the experiments considered in the main paper, to strengthen the support for our theory. 

\subsection{Electrocardiogram (ECG) Classification}

We consider the Electrocardiogram (ECG) binary classification task that aims to discriminate between normal and abnormal heart beats of a patient that has severe congestive heart failure \cite{goldberger2000physiobank}. We use 500 sequences of
length 140 for training, and 4000 sequences for testing. We use a fully connected shallow neural network of width 32 with sigmoid activation and train for 1000 epochs, with the binary cross-entropy as the loss. We choose $\eta = 0.05$. For the MPGD and the Gaussian scheme, we inject the perturbations in the first 500 epochs instead, and stop injecting after that to allow the algorithm to converge. We use the  perturbation level $\mu = \sigma = 0.2$. The experimental setup is implemented in PyTorch, and all experiments are run on Google Colab.

Table \ref{tab:ecg} shows the average test accuracy (evaluated for 5 models that are trained with different seed values) for this task.
We see that overall MPGD improves the test accuracy when compared to the vanilla GD and Gaussian perturbations. In fact, the vanilla GD gets stuck in the loss landscape. While Gaussian perturbations can help to mobilize the GD iterates in the landscape, MPGD is more effective in steering the iterates to achieve higher test accuracy (see Figure \ref{fig_ecg}), which is the main goal of the learning task. This illustrates that adding the perturbations of MPGD can help improve the outcome of the optimization process in situations where the vanilla GD and  Gaussian perturbations fail to make meaningful progress. 

Note that the higher values of the perturbation levels used in the MPGD schemes lead to instabilities and fluctuations of large magnitudes in the earlier epochs.  These are necessary to boost the test accuracy; see the jump in the improvement of the test accuracy after epoch 500 in Figure \ref{fig_ecg}. The instabilities and fluctuations could be reduced if lower perturbation levels were used, but this would lower the test accuracy (which would still be higher than the baseline and the Gaussian results) obtained. Again, we emphasize that we are not going after competitive test performance here, but rather demonstrating the effectiveness of MPGD in improving the test performance in situations when other training schemes fail to make substantial progress in optimization.

\begin{figure}[!t] 
\centering
\includegraphics[width=0.48\textwidth]{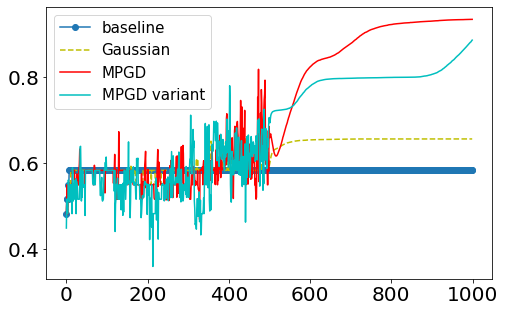}
\hspace{0.2cm}
\includegraphics[width=0.48\textwidth]{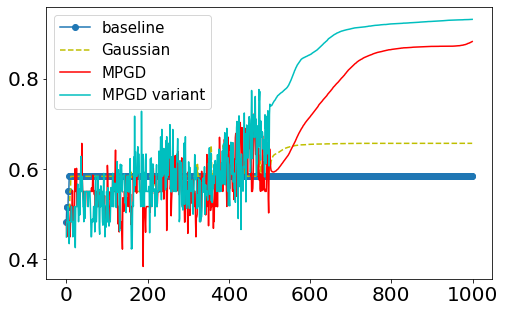}
\caption{Mean test accuracy obtained under various GD schemes over 1000 epochs for the ECG classification task. Here we choose the learning rate $\eta = 0.05$. MPGD refers to the scheme \eqref{eq_ouralg}, whereas  MPGD variant refers to  \eqref{DS_sym}, both using $\mu = \sigma = 0.2$. The MPGDs considered in the left plot use $\gamma = 0.55, \beta = 0$, whereas those in the right plot use $\gamma = 0.55, \beta = 0.5$. We see that applying the MPGD schemes in the first 500 epochs helps  to boost the test accuracy significantly when compared to the other two schemes.}
\label{fig_ecg}
\end{figure}

\begin{table}[!t]
	\caption{Shallow neural networks trained  on the ECG5000 Data Set for 1000 epochs.  The results in parenthesis are achieved with the variant of MPGD \eqref{DS_sym}. All the results are averaged over 5 models trained with different seed values. Here $\rm{std}$ denotes sample standard deviation. We use $\eta = 0.05$, and $\mu = \sigma = 0.2$ for the perturbation schemes. }
	\label{tab:ecg}
    \centering
        \scalebox{0.8}{
\begin{tabular}{@{}lccc@{}}
\toprule
\multicolumn{1}{c}{Scheme}   & mean test accuracy in \% & $\rm{std}$(test accuracy) in \% \\ \midrule
Baseline full batch GD                    &  58.38 &  0.0 \\
Gaussian                         & 65.68  &   16.34             \\
\midrule 
MPGD, $\gamma = 0.55$, $\beta = 0$      & 93.48 (88.65)  &  1.17 (9.16)        \\
MPGD, $\gamma = 0.6$, $\beta = 0$                       & {\bf 94.87} (90.52)   & 0.78 (6.69)         \\
MPGD, $\gamma = 0.65$, $\beta = 0$      &  92.61 (91.72)  &  0.99 (4.28)        \\
\midrule
MPGD, $\gamma = 0.55$, $\beta = 0.5$      & 88.22 (93.13)  &   13.99 (2.20)    \\
MPGD, $\gamma = 0.6$, $\beta = 0.5$                       & 94.27 (91.77)   &    1.06 (2.85)      \\
MPGD, $\gamma = 0.65$, $\beta = 0.5$      &   94.63 (83.87)  &  1.13 (23.69)        \\
\bottomrule
\end{tabular}
}
\end{table}

\subsection{Details and Additional Results for the Airfoil Self-Noise Prediction Task}

The experimental setup is implemented in PyTorch, and all experiments are run on 4x NVIDIA Tesla T4 GPUs with 16 GB VRAM belonging to an internal SLURM cluster. 
Table \ref{tab:stat_airfoil} reports the sample standard deviation for the results obtained in Table \ref{tab:airfoil}. 
Table \ref{tab:stat_airfoil_beta0} shows the corresponding results when different values of $\beta$ are used instead, illustrating the trade-offs induced by the selection of the stability parameter $\gamma$ and the skewness parameter $\beta$ for this particular setting. Here, we see that using the MPGD variant \eqref{DS_sym} with $\gamma = 0.6$  and $\beta = 0.5$ (see Table \ref{tab:airfoil}) leads to the lowest RMSE gap.

\begin{table}[!t]
	\caption{Statistics for the results obtained in Table \ref{tab:airfoil}. We report the sample standard deviation, denoted $\rm{std}$, of the test RMSE and RMSE gap. The values in parenthesis refer to the sample standard deviation results for the MPGD \eqref{DS_sym}. }
	\label{tab:stat_airfoil}
    \centering
    \scalebox{0.8}
    {
\begin{tabular}{@{}lcccc@{}}
\toprule
\multicolumn{1}{c}{Scheme} &   $\rm{std}$(test RMSE)  &  $\rm{std}$(RMSE gap)  \\ \midrule
Baseline     &   0.0595    & 0.0110    \\
Gaussian          &  0.0980 & 0.0136  \\
MPGD, $\gamma = 0.55$           &  0.1035 (0.0629)  &   0.0201 (0.0162)  \\
MPGD, $\gamma = 0.6$        & 0.0375 (0.1266)  & 0.0083 (0.0092)  \\
MPGD, $\gamma = 0.65$         & 0.0720 (0.0340)  &    0.0295 (0.0083) \\
MPGD, $\gamma = 0.7$    &  0.1091 (0.0094)  & 0.0162 (0.0076)    \\
\bottomrule 
\end{tabular}
}
\end{table}

\begin{table}[!t]
	\caption{Mean and standard deviation of RMSE gap under the setting used for obtaining the results in Table \ref{tab:airfoil} but with $\beta = 0$ (left) and $\beta = 0.25$ (right) for the MPGDs instead. The values in parenthesis refer to the corresponding results for the MPGD \eqref{DS_sym}. Here $\rm{std}$ denotes sample standard deviation, and $\eta, \mu$ and $\sigma$ are the same as the ones used for obtaining the results in Table \ref{tab:airfoil}. }
	\label{tab:stat_airfoil_beta0}
    \begin{center}
    \scalebox{0.8}
    {
    \begin{minipage}{0.05\textwidth}
    \hspace{-7.8cm}
\begin{tabular}{@{}lcccc@{}}
\toprule
\multicolumn{1}{c}{Scheme} &   mean  &  $\rm{std}$  \\ \midrule
MPGD, $\gamma = 0.55$           & 0.2308 (0.2325)   & 0.0091 (0.0107)   \\
MPGD, $\gamma = 0.6$        &  0.2274 (0.2339) &  0.0151 (0.0167)\\
MPGD, $\gamma = 0.65$         & {\bf 0.2273} (0.2373) & 0.0113 (0.0173) \\
MPGD, $\gamma = 0.7$    & 0.2362 (0.2333)  &  0.0092 (0.0093) \\
\bottomrule 
\end{tabular}
\end{minipage}
\hspace{0.15cm}
\begin{minipage}{0.05\textwidth}
\begin{tabular}{@{}lcccc@{}}
\toprule
\multicolumn{1}{c}{Scheme} &   mean  &  $\rm{std}$  \\ \midrule
MPGD, $\gamma = 0.55$           & 0.2264 ({\bf 0.2238})    &  0.0189 (0.0157) \\
MPGD, $\gamma = 0.6$        &  0.2267 (0.2247) & 0.0127 (0.0152)   \\
MPGD, $\gamma = 0.65$         & 0.2330 (0.2267) & 0.0207 (0.0045) \\
MPGD, $\gamma = 0.7$    & 0.2393 (0.2270)  & 0.0139 (0.0058)  \\
\bottomrule 
\end{tabular}
\end{minipage}
}
\end{center}
\end{table}

\subsection{Details for the CIFAR-10 Classification Task}

We use the implementation of ResNet18 and modify the implementation of the full-batch GD training provided at \url{https://github.com/JonasGeiping/fullbatchtraining} to set up MPGD and GD with Gaussian perturbations. Please refer to Section \ref{sect_empiricalresults} and \cite{geiping2021stochastic} for the relevant details.  The experimental setup is in PyTorch, and all experiments are run on an NVIDIA A100-SXM4 GPU with 40 GB VRAM belonging to an internal SLURM cluster. Table \ref{tab:cifar10-resnet18} reports the mean validation accuracies over runs of 5 models trained with different seed values, whereas Table \ref{tab:stat_cifar10} reports the sample standard deviation for the results in Table \ref{tab:cifar10-resnet18}. 

\begin{table}[!t]
	\caption{Statistics for the results obtained in Table \ref{tab:cifar10-resnet18}. We report the sample standard deviation (in \%), denoted $\rm{std}$, for the validation accuracy and the accuracy gap. The values in parenthesis refer to the sample standard deviation results for the MPGD variant \eqref{DS_sym}. }
	\label{tab:stat_cifar10}
    \centering
    \scalebox{0.8}
    {
\begin{tabular}{@{}lcccc@{}}
\toprule
\multicolumn{1}{c}{Scheme} &   $\rm{std}$(val. accuracy)  &  $\rm{std}$(accuracy gap)  \\ \midrule
Baseline    & 1.75        &  1.97    \\
Gaussian      & 1.13   & 0.53   \\
MPGD, $\gamma = 0.55$           & 2.52 (2.55)  & 2.45 (2.21)  \\
MPGD, $\gamma = 0.6$        & 1.52 (2.44) & 1.65 (2.00) \\
MPGD, $\gamma = 0.65$         & 1.09 (2.86)  & 1.22 (3.00) \\
MPGD, $\gamma = 0.7$    & 4.40 (1.57)  & 3.55 (1.10)    \\
\bottomrule 
\end{tabular}
}
\end{table}

\end{document}